\documentclass[lettersize,journal]{IEEEtran}
\usepackage{amsmath,amsfonts}
\usepackage{algorithm}

\usepackage{array}
\usepackage{textcomp}
\usepackage{stfloats}
\usepackage{verbatim}
\usepackage{graphicx}
\usepackage{cite}
\usepackage[hyphens]{url}

\hyphenation{op-tical net-works semi-conduc-tor IEEE-Xplore}
% updated with editorial comments 8/9/2021

\usepackage{amsmath,amssymb,amsfonts}
\usepackage{graphicx}
\usepackage{textcomp}
\usepackage{xcolor}
\usepackage{amsthm}
\usepackage{amssymb}

\usepackage{setspace}
\usepackage{psfrag}
\usepackage{adjustbox}
\usepackage{multirow}
\theoremstyle{definition}
\newtheorem{theorem}{Theorem}
\newtheorem{lemma}[theorem]{Lemma}
\usepackage{comment}
\usepackage{subcaption}
\usepackage{float}
\usepackage{booktabs}

\newcommand{\bd}[1]{\mathbf{#1}}
\newcommand{\set}[1]{\mathcal{#1}}

\usepackage[english]{babel}
\usepackage{overpic}
\usepackage{algorithm}% http://ctan.org/pkg/algorithms
\usepackage{algpseudocode}% http://ctan.org/pkg/algorithmicx

\algrenewcommand\algorithmicrequire{\textbf{Input:}}
\algrenewcommand\algorithmicensure{\textbf{Output:}}
\algrenewcommand\algorithmicforall{\textbf{for}}
\DeclareMathOperator*{\argmax}{arg\,max}
\captionsetup{font=small}

\usepackage{xurl}  

\usepackage[breaklinks=true]{hyperref}
\usepackage{breakurl}
\begin{document}

\title{Adaptive Multi-task Learning for\\ Probabilistic Load Forecasting}

\author{Onintze Zaballa, Ver\'onica \'Alvarez, and Santiago Mazuelas,~\IEEEmembership{Senior Member,~IEEE}

\thanks{Manuscript received XXXX, YYYY; revised XXXX, YYYY.

Funding in direct support of this work has been provided by projects PID2022-137063NB-I00 and CEX2021-001142-S funded by MCIN/AEI/10.13039/501100011033 and the European Union “NextGenerationEU”/PRTR, and programes ELKARTEK and BERC-2022-2025 funded by the Basque Government. Verónica Álvarez holds a postdoctoral grant from the
Basque Government.

O. Zaballa was with the BCAM-Basque Center for Applied Mathematics,
Bilbao 48009, Spain. She is now with Zenit Solar Tech, Donostia-San Sebastián 20012, Spain  (e-mail: o.zaballa@zenitsolar.eu).

V. \'Alvarez is with the Massachusetts Institute of Technology (MIT), Cambridge 02139, USA, and BCAM-Basque Center for Applied Mathematics, Bilbao 48009, Spain (e-mail: vealvar@mit.edu).

S. Mazuelas is with the BCAM-Basque Center for Applied Mathematics, and IKERBASQUE-Basque Foundation for Science, Bilbao 48009, Spain (e-mail: smazuelas@bcamath.org).
}}
% The paper headers
\markboth{IEEE Transactions on Power Systems}%
{Shell \MakeLowercase{\textit{et al.}}: A Sample Article Using IEEEtran.cls for IEEE Journals}

%\IEEEpubid{0000--0000/00\$00.00~\copyright~2021 IEEE}
% Remember, if you use this you must call \IEEEpubidadjcol in the second
% column for its text to clear the IEEEpubid mark.

\maketitle

\begin{abstract}
Simultaneous  load forecasting across multiple entities (e.g., regions, buildings) is crucial for the efficient, reliable, and cost-effective operation of power systems. Accurate load forecasting is a challenging problem due to the inherent uncertainties in load demand, dynamic changes in consumption patterns, and correlations among entities. Multi-task learning has emerged as a powerful machine learning approach that enables the simultaneous learning across multiple related problems. However, its application to load forecasting remains under-explored and is limited to offline learning methods, which cannot capture changes in consumption patterns.  This paper presents an adaptive multi-task learning method for probabilistic load forecasting. The proposed method can dynamically adapt to changes in consumption patterns and correlations among entities. In addition, the techniques presented provide reliable probabilistic predictions for loads of multiple entities and assess load uncertainties. Specifically, the method is based on vector-valued hidden Markov models and uses a recursive process to update the model parameters and provide predictions with the most recent parameters. The performance of the proposed method is evaluated using datasets that contain the load demand of multiple entities and exhibit diverse and dynamic consumption patterns. The experimental results show that the presented techniques outperform existing methods both in terms of forecasting performance and uncertainty assessment.
\end{abstract}

\begin{IEEEkeywords}
Multi-task learning, Probabilistic load forecasting, Online learning.
\end{IEEEkeywords}

\section{Introduction}

\IEEEPARstart{L}{oad} forecasting is essential for ensuring efficient energy generation, sustaining supply-demand balance, and achieving cost-effective power operations \cite{lee2023multi, guo2021forecast}. With the advent of smart grids, accurate load forecasting has become increasingly important also to maintain network stability and to enhance bidding strategies in power markets \cite{wang2018review, wang2023multitask}. In power systems, accurately predicting the energy associated with multiple entities (e.g., regions, buildings, or neighborhoods) is crucial for optimizing energy distribution, managing demand, and integrating renewable resources \cite{melo2012multi, carreno2010cellular, savino2025scalable, kaspar2022critical}. For instance, load forecasting at the neighborhood level in smart grids helps balance supply and demand more efficiently because it enables localized control strategies, reduces the need for over-provisioning, and supports real-time decision-making for distributed energy resources. However, load forecasting remains challenging due to the complexity of the electric grid, the inherent uncertainties in load consumption, the dynamic changes in consumption patterns, and the correlations among entities \cite{lin2024electric, obst2021adaptive, lin2021spatial} (see Fig.~\ref{fig:graphicalmodel}).

\begin{figure}
    \centering
        \psfrag{a}[][l][0.8]{\hspace{0.3cm} $p(\bd{s}_{t}|\bd{s}_{t-1})$}
        \psfrag{b}[][l][0.8]{ $p(\bd{r}_{t-1}|\bd{s}_{t-1})$ \hspace{0.7cm}}
        \psfrag{c}[][l][0.8]{\hspace{0.2cm}  $p(\bd{r}_{t}|\bd{s}_{t})$ \hspace{0.3cm}}
        \psfrag{d}[][l][0.8]{\hspace{0.1cm}  $\bd{s}_{t-1}$}
        \psfrag{e}[][l][0.8]{\hspace{0.1cm}  $\bd{r}_{t-1}$}
        \psfrag{f}[][l][0.8]{ $\bd{s}_{t}$}
        \psfrag{g}[][l][0.8]{ $\bd{r}_{t}$}
        \includegraphics[width=\columnwidth]{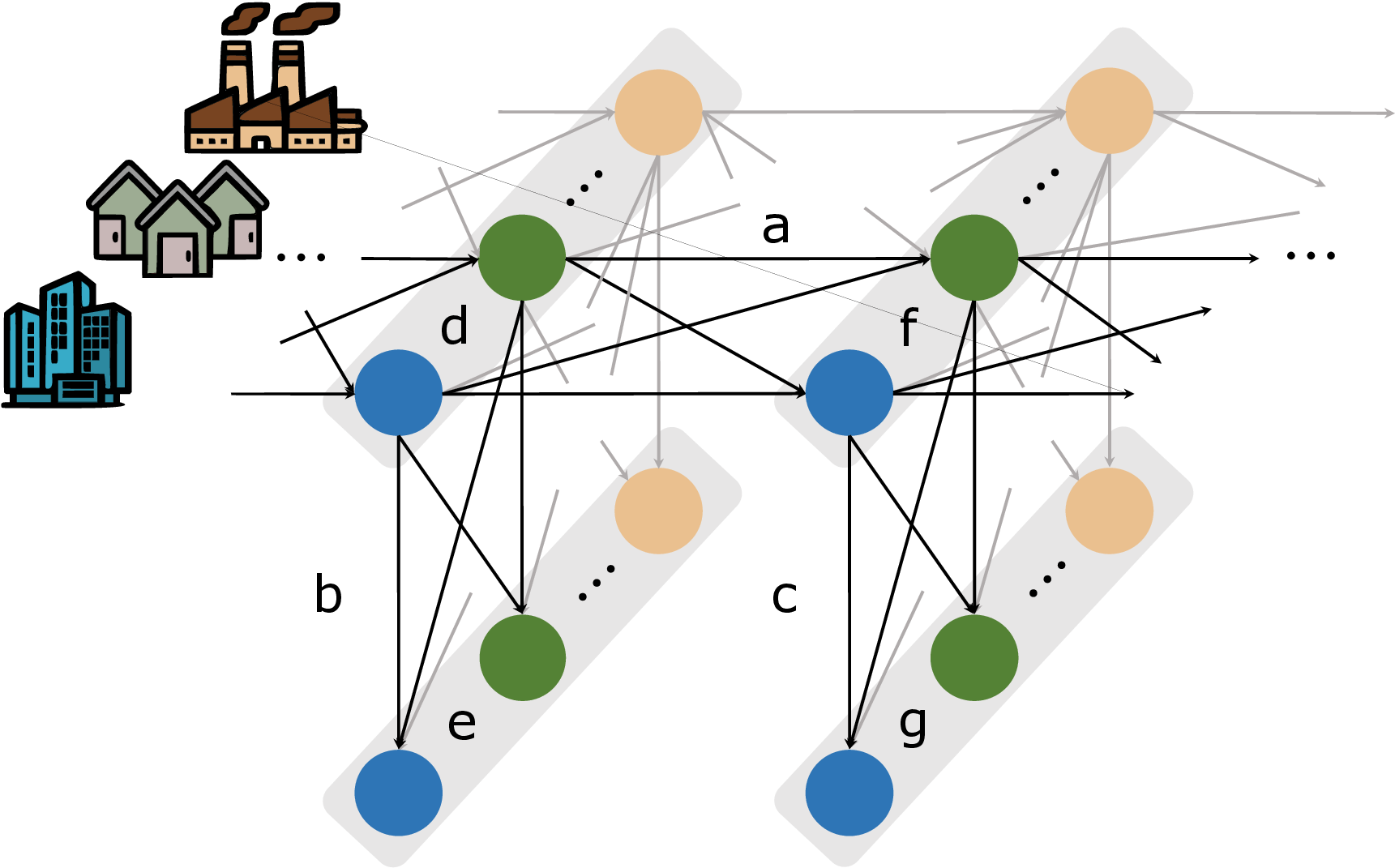}
    \caption{The proposed multi-task method leverages information from multiple entities to learn the consumption patterns over time. In particular, a vector-valued HMM enables to capture the relationship between multiple consecutive loads, $p(\bd{s}_t|\bd{s}_{t-1})$, and between multiple loads and observations, $p(\bd{r}_t|\bd{s}_t)$.}
    \label{fig:graphicalmodel}
\end{figure}

Forecasting methods exploit consumption patterns characterized by various factors including past loads, day times, holidays, and weather conditions \cite{ranaweera1997economic, taylor2002neural}. However, these patterns are inherently dynamic and uncertain, as they evolve with changes in consumer behavior and are affected by unforeseen events \cite{li2023residential, kong2017short}. Online learning techniques are necessary to effectively adapt to dynamic changes in consumption patterns and continuously update models as new data arrive~\mbox{\cite{von2020online, bahrami2017online}}. In addition, probabilistic forecasting is essential for quantifying uncertainties in load demand and supporting better informed decisions in power operations~\mbox{\cite{draz2024probabilistic, sun2019using}}. Probabilistic forecasting becomes even more crucial  for simultaneous load forecasting across multiple entities since probabilistic forecasts can enable robust decision-making that optimally allocates resources and anticipates cascading effects across the entire power system \cite{samadi2013tackling, hernandez2014survey, yang2019bayesian, wang2022novel}. For instance, probabilistic load forecasting at the neighborhood level in smart grids allows operators to allocate resources accounting for the estimated uncertainty. A significant uncertainty in a neighborhood may trigger additional reserves, while a reduced uncertainty can enable to more efficiently allocate resources. Moreover, if the estimated correlation between two neighborhoods is negative, a higher than expected demand in one may be offset by lower demand in the other, reducing overall system risk.  However, existing learning methods that
are both probabilistic and online~\cite{alvarez2021probabilistic} are designed for single-task learning that separately provide load forecasts for each individual entity. 

In the machine learning literature, multi-task learning has emerged as a powerful approach that enables simultaneous learning across multiple related problems \cite{caruana1997multitask, deng2022multi}. Multi-task learning can improve overall performance by capturing both task-specific patterns and correlations among assorted tasks. These techniques usually learn dependencies among assorted tasks by using training data from all tasks \mbox{\cite{zhang2021survey, tripuraneni2020theory}}. 
Multi-task learning can be exploited to enhance the load predictions for each entity by simultaneously learning from multiple entities and leveraging the relationships among their consumption patterns. 

Multi-task learning remains under-explored in
the context of load forecasting. Only a few load forecasting
techniques have applied multi-task learning to forecast loads across multiple entities \cite{Zhang2014, gilanifar2019multitask, fiot2016electricity}. For instance, multi-task learning methods based on Gaussian processes can predict the load demand across  multiple entities by using correlations from historical load demand data~\cite{Zhang2014}. Methods based on Bayesian spatiotemporal Gaussian processes can predict load demand across  multiple entities by using correlations obtained from dynamic environmental and traffic conditions~\cite{gilanifar2019multitask}. In addition, multi-task learning methods based on kernels can obtain mid-term predictions by exploiting correlations in seasonal patterns~\cite{fiot2016electricity}. However, existing load forecasting techniques that use multi-task learning are based on offline learning and cannot dynamically adapt to changes in consumption patterns. In addition, current methods are not able to accurately assess load uncertainty in multi-task scenarios, and often result in high computational complexities.

\begin{figure*}
    \centering
        \psfrag{A}[][t][0.8]{New South Wales}
        \psfrag{C}[][t][0.8]{Tasmania}
        \psfrag{B}[][t][0.8]{Queensland}
        \psfrag{D}[][t][0.8]{South Australia}
        \psfrag{Day}[][t][0.7]{Day of year}
        \psfrag{Hour}[][b][0.7]{Hour of day}
        \psfrag{0}[][l][0.5]{}
        \psfrag{6}[][][0.45]{6}
        \psfrag{12}[][][0.45]{12}
        \psfrag{18}[][][0.45]{18}
        \psfrag{50}[][][0.45]{50}
        \psfrag{100}[][][0.45]{100}
        \psfrag{150}[][][0.45]{150}
        \psfrag{200}[][][0.45]{200}
        \psfrag{250}[][][0.45]{250}
        \psfrag{300}[][][0.45]{300}     
        \psfrag{350}[][][0.45]{350}
        \psfrag{2}[][][0.45]{2}
        \psfrag{2.5}[][c][0.45]{2.5}
        \psfrag{3}[][][0.45]{3}
        \psfrag{3.5}[][c][0.45]{3.5}
        \psfrag{4}[][][0.45]{4}
        \psfrag{4.5}[][c][0.45]{4.5}
        \psfrag{5}[][][0.45]{5}
        \psfrag{0.6}[][][0.5]{\hspace{0.1cm} 0.6}
        \psfrag{8}[][][0.45]{8}
        \psfrag{10}[][][0.45]{10}
        \psfrag{14}[][][0.45]{14} 
        \psfrag{16}[][][0.45]{16}  
        \psfrag{20}[][][0.45]{20}  
        \psfrag{22}[][][0.45]{22} 
        \psfrag{1.2}[][][0.45]{1.2}
        \psfrag{1.6}[][][0.45]{1.6} 
        \psfrag{2.4}[][][0.45]{2.4}  
        \psfrag{2.8}[][][0.45]{2.8}  
        \psfrag{3.2}[][][0.45]{3.2} 
        \psfrag{1.8}[][][0.45]{1.8}  
        \psfrag{1.4}[][][0.45]{1.4}  
        \psfrag{1}[][][0.45]{1}  
        \psfrag{0.8}[][][0.45]{0.8} 
        \psfrag{0.4}[][][0.45]{0.4} 
        \psfrag{5}[][][0.45]{5}  
        \psfrag{7}[][][0.45]{7} 
        \psfrag{9}[][][0.45]{9}  
        \psfrag{11}[][][0.45]{11}  
        \begin{overpic}[width=1\textwidth]{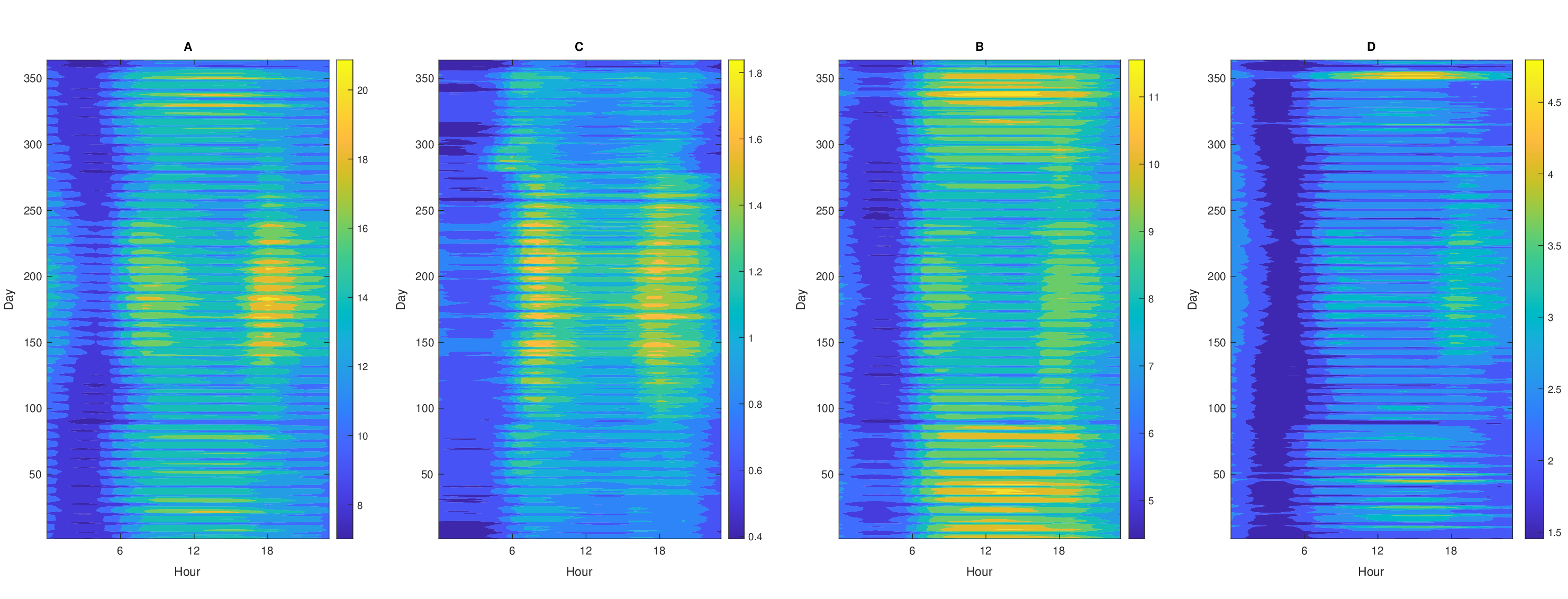}
        \put(20.5, 1.5){\scriptsize [GW]}
        \put(45.5, 1.5){\scriptsize [GW]}
        \put(70.7, 1.5){\scriptsize [GW]}
        \put(96.3, 1.5){\scriptsize [GW]}
    \end{overpic}
    \caption{Heat maps corresponding to hourly and daily loads of four entities in Australia. The consumptions in New South Wales  and Tasmania exhibit similar patterns, the consumptions in Queensland show moderate differences from New South Wales, and the consumptions in South Australia are substantially different. 
     The figure also shows how such similarities can  change often over time.}
    \label{fig:patterns}

\end{figure*}

In this paper, we present adaptive multi-task learning methods for probabilistic load forecasting (\mbox{Multi-APLF}). The proposed method can dynamically adapt to changes in consumption patterns and correlations among entities. In addition, the techniques presented provide reliable probabilistic predictions for loads of multiple entities. Specifically, the main contributions of the paper are as follows.
\begin{itemize}\itemsep0em 
    \item We develop online learning methods for multi-task load forecasting based on vector-valued hidden Markov models (HMMs). In particular, the presented methods can effectively account for the relationship among the consumption patterns of multiple entities (see Fig.~\ref{fig:graphicalmodel}).
    \item We develop sequential prediction techniques for \mbox{Multi-APLF} that obtain probabilistic forecasts for multiple energy loads  using the most recently learned HMM parameters. These techniques recursively generate predictions, estimating both the uncertainty of each entity and the correlations between different entities.
    \item We provide a detailed description of the implementation of the online learning and probabilistic forecasting steps, along with their computational complexity analysis. Our approach is significantly more efficient than existing multi-task techniques, making it suitable for large-scale applications involving multiple entities.
    \item The experimental results show that the presented methods outperform existing methods both in terms of forecasting performance and uncertainty assessment.
\end{itemize}

The multi-task method proposed in this paper generalizes the single-task techniques introduced in \cite{alvarez2021probabilistic}. Although HMMs are well established in other domains, their application to load forecasting has been limited  and is restricted to the single-task settings \cite{alvarez2021probabilistic, hermias2017short, wang2020short}.  This paper extends the HMM framework introduced in \cite{alvarez2021probabilistic} to simultaneously model and forecast multiple entities. This requires learning a richer probabilistic structure that captures both the individual uncertainty of each entity and the correlations among them. As a result, the number of HMM parameters is substantially larger than in the single-task case.  The generalization to multi-task learning leverages the relationships in consumption patterns among multiple entities to enhance the accuracy of load forecasting for each individual entity. Figure~\ref{fig:patterns} illustrates the relationship and varying similarity among consumptions per hour of day and day of the week of four different entities  corresponding to Australian regions. The figure also shows that consumption patterns   change often over time, not only for different seasons.

Preliminary results for the multi-task generalization have been presented as a conference paper \cite{zaballa2024multitask}. 
The current work  introduces four major advancements beyond that preliminary work: (i) the development of theoretical guarantees for the proposed method with formal analysis and proofs; (ii) a full implementation and detailed algorithmic description; (iii) a computational complexity analysis of the proposed methods with existing approaches; and (iv) an extensive experimental evaluation of predictive accuracy and probabilistic performance against additional existing methods.

The rest of this paper is organized as follows. Section~\ref{sec:loadforecasting} describes the load forecasting problem. In Section \ref{sec:multitask}, we present the learning and prediction steps of the proposed method, as well as its implementation. Section \ref{sec:experiments} evaluates the performance of the proposed method in comparison with existing techniques. Finally, Section \ref{sec:conclusion} draws the conclusions.

\paragraph*{Notations} bold lowercase letters represent vectors; bold capital letters represent matrices; $\set{N}\left( \bd{x}; \boldsymbol{\mu}, \boldsymbol{\Sigma} \right)$ denotes the multivariate Gaussian density function of the variable $\bd{x}$ with mean $\boldsymbol{\mu}$ and covariance matrix $\boldsymbol{\Sigma}$; $p\left(\bd{x}|\bd{y} \right)$ denotes the probability of variable $\bd{x}$ given variable $\bd{y}$; $\mathbb{I}(\cdot)$ indicates denotes the indicator function; $\propto$ indicates that two expressions are proportional;  $\bd{I}_d$ denotes the $d \times d$ identity matrix; $\bd{0}$ denotes the matrix with all components equal to 0; $\bd{r}_{i:j}$ denotes the array of vectors $[\bd{r}_{i}^\top, \bd{r}_{i+1}^\top,..., \bd{r}_{j}^\top]^\top$;  and $[\, \cdot \,]^{\top}$ denotes the transpose  of its argument.% % \vspace{-0.15cm}

\section{Load forecasting}
\label{sec:loadforecasting}

This section first formulates the problem of load forecasting, and
then briefly describes the performance metrics used.

\subsection{Problem formulation}

 In power systems, load forecasting methods aim to estimate future demand across multiple entities using historical consumption data together with load-related observations.  Let \mbox{$\bd{s}= [s_{1},s_{2},...,s_{K}]^\top \in \mathbb{R}^{K}$} and  \mbox{$\widehat{\bd{s}}=[ \widehat{s}_{1}, \widehat{s}_{2},...,\widehat{s}_{K}]^\top \in \mathbb{R}^{K}$} denote the vector of actual and forecast loads, respectively, for $K$ entities.  
 For instance, each $s_i$ with \mbox{$i\in \{1,2,...,K\}$} may represent the load demand  in a specific neighborhood, region, or building within a power distribution network, and an accurate $\widehat{s}_i$ for such neighborhood is critical for scheduling generation, sharing energy with other regions, reducing operational costs, and ensuring system stability.

Load-related observations include factors that influence future load demands, such as weather forecasts or renewable generation forecasts. 
Let $\bd{r}_{i}\in \mathbb{R}^{R}$ denote the observation vector for each entity \mbox{$i\in \{1,2,...,K\}$}, and \mbox{$\bd{r}=[ \bd{r}_{1}^\top, \bd{r}_{2}^\top,...,\bd{r}_{K}^\top]^\top\in \mathbb{R}^{KR}$} denote the array composed of concatenating the observation vectors of the $K$ entities of the power system. 
In addition, let $c(t) \in \{1,2,...,C\}$ be a calendar variable that describes time factors that may affect load demand at each time $t$, such as the hour of the day, day of the week, or holiday period. The usage of such calendar information is highly valuable for forecasting the loads of multiple entities, for instance  residential areas typically exhibit lower consumption during working hours and higher consumption in the evening, while commercial zones may show the opposite pattern.

Load forecasting techniques obtain a prediction function that relates instance vectors $\bd{x}$ to target vectors $\bd{y}$. In multi-task techniques, instance vectors $\bd{x}$ (predictors) are composed of past loads of $K$ entities and observations related to future loads. Target vectors $\bd{y}$ (responses) are composed of future loads of $K$ entities. Then, for a prediction horizon $L$ (e.g., 24-48 hours, 30 minutes) and prediction times \mbox{$t+1,$} \mbox{$t+2,$} ..., \mbox{$t+L$}, the instance vector $\bd{x}$ is composed of load vectors $\bd{s}_{t-m+1}^\top, \bd{s}_{t-m+2}^\top,..., \bd{s}_t^\top$ corresponding to $m$ previous times, and of observation vectors $\bd{r}_{t+1}^\top,\bd{r}_{t+2}^\top, ..., \bd{r}_{t+L}^\top$ so that \mbox{$\bd{x}\in \mathbb{R}^{K (m+LR)}$}. In addition, the target vector is defined as \mbox{$\bd{y} = [\bd{s}_{t+1}^\top,\bd{s}_{t+2}^\top,...,\bd{s}_{t+L}^\top]^\top \in \mathbb{R}^{K L}$}, and the vector of load forecasts is given by \mbox{$\widehat{\bd{y}} = [\widehat{\bd{s}}_{t+1}^\top,\widehat{\bd{s}}_{t+2}^\top,...,\widehat{\bd{s}}_{t+L}^\top]^\top \in \mathbb{R}^{KL}$}. 

Multi-task load forecasting is a high-dimensional learning problem because it involves the simultaneous predictions of multiple consumptions using models characterized by a large number of parameters. For instance, methods based on multiple linear regression (MLR) obtain load forecasts as \mbox{$\widehat{\bd{y}}=\bd{W}^\top \bd{x}$}. The weight matrix is determined as \mbox{$\bd{W}= (\bd{X}^\top \bd{X})^{-1} \bd{X}^\top \bd{Y} \in \mathbb{R}^{K(m + LR) \times KL}$}, where the parameters are estimated using $N$ training samples $\{(\bd{x}_i, \bd{y}_i)\}_{i=1}^N$~\cite{bishop2006pattern}. These samples form the matrices $\bd{X}\in\mathbb{R}^{N \times K(m+LR)}$ and $\bd{Y}\in\mathbb{R}^{N \times KL}$. MLR models require a large number of parameters, for instance, considering $K=8$ entities, $m=20$ previous loads and $R=3$ observations, the matrix $\bd{W}$ is composed of $141,312$ parameters for a prediction horizon of $L=24$.

\subsection{Performance metrics}

Performance of forecasting algorithms is evaluated in terms of accuracy using the absolute value of prediction errors: 
\begin{equation}
\label{predictionerror}
e = \left|s - \widehat{s}\right|
\end{equation}
while probabilistic performance can be evaluated using metrics such as continuous ranked probability score (CRPS) \cite{gneiting2007strictly}, calibration \cite{gneiting2007probabilistic}, and pinball loss \cite{hong2016probabilistic}. Overall prediction errors are commonly quantified using root mean square error (RMSE) given by
\begin{equation*}
\text{RMSE} = \sqrt{\mathbb{E}\left\{\left|s - \widehat{s}\right|^2\right\}}
\end{equation*}
and mean average percentage error (MAPE) given by
\begin{equation*}
\text{MAPE} = 100 \cdot \mathbb{E}\left\{\frac{\left|s - \widehat{s}\right|}{s}\right\}.
\end{equation*}
The CRPS is given by
\begin{equation}
    \text{CRPS}(F,s) = \int_{-\infty}^\infty  (F(y) - \mathbb{I}\{y \geq s\})^2 dy
\end{equation}
which compares the predictive CDF $F$ in relation to the actual observation $s$. In particular, we compute the CRPS value as shown in \cite{gneiting2007strictly}. In addition, we use the calibration of the $q$-th quantile forecast $\widehat{s}_{(q)}$ given by 
\begin{equation}
    C(q) = \mathbb{E} \{ \mathbb{I}_{s < \widehat{s}_{(q)} } \}
\end{equation}
to assess the probability that the observed load is smaller than the forecasted quantile $\widehat{s}_{(q)}$ \cite{gneiting2007probabilistic}. The overall calibration error~(CE) is quantified by the average over all quantiles of the error $|C(q)-q|$. Finally, the pinball loss for the $q$-th quantile forecast $\widehat{s}_{(q)}$ is defined as
% % \vspace{-0.2cm}
\begin{equation}
\text{pinball}(\widehat{s}, s, q) =
\begin{cases}
q  (s - \widehat{s}) & \text{if } s \geq \widehat{s}_{(q)} \\
(1 - q)  (\widehat{s} - s) & \text{if } s < \widehat{s}_{(q)}
\end{cases}
\end{equation}
which measures the accuracy of quantile forecasts, and the overall pinball loss is quantified by the average over all quantiles \cite{hong2016probabilistic}. A lower value of these scores indicates a better prediction interval.

\section{Multi-task probabilistic load forecasting based on online learning}
\label{sec:multitask}

This section describes the multi-task method for load forecasting based on probabilistic and online learning techniques. The proposed \mbox{Multi-APLF} method leverages information from multiple entities to learn the consumption patterns over time. In particular, a vector-valued HMM enables to capture the relationship between multiple consecutive loads, and between multiple loads and observations.

The vector-valued HMM is characterized by the conditional distributions $p(\bd{s}_t|\bd{s}_{t-1})$, which represent the relationship between multiple consecutive loads, and $p(\bd{r}_t|\bd{s}_t)$, which represents the relationship between the loads and the observation vectors of the entities, with $\bd{s}_t=[s_{t,1}, s_{t,2},..., s_{t,K}]^\top \in \mathbb{R}^{K}$ and  $\bd{r}_t=[ \bd{r}_{t,1}^\top, \bd{r}_{t,2}^\top,...,\bd{r}_{t,K}^\top]^\top \in \mathbb{R}^{KR}$ (see Fig.~\ref{fig:graphicalmodel}). These conditional probabilities are modeled using multivariate Gaussian distributions with mean~$\bd{M}\bd{u}$ and covariance matrix~$\bd{\Sigma}$, where $\bd{u}$ is a known feature vector. The parameters described by matrices $\bd{M}$ and $\bd{\Sigma}$ vary for each calendar type \mbox{$c(t) \in \{1,2,...,C\}$} and change over time. The mean matrix~$\bd{M}$ and the covariance matrix~$\bd{\Sigma}$ provide information on how one entity can influence and be influenced by another entity at each time. Specifically, matrix~$\bd{\Sigma}$ describes the correlations between pairs of entities.

For each calendar type $c=c(t)$, let $\bd{M}_{s,c} \bd{u}_s$ and $\bd{\Sigma}_{s,c}$ denote the mean and covariance matrix that characterize the conditional distribution of loads at time $t$ given the loads at time $t-1$, that is, % \vspace{-0.2cm}
\begin{equation}
\label{eq:model1_MT}
  p(\bd{s}_{t}| \bd{s}_{t-1}) = \set{N}(\bd{s}_t;   \bd{M}_{s,c} \bd{u}_{s}, \bd{\Sigma}_{s,c})
  \end{equation}
where $\bd{M}_{s,c}~\in~\mathbb{R}^{K \times (K+1)}$, $\bd{\Sigma}_{s,c} \in \mathbb{R}^{K \times K}$, and $\bd{u}_{s}~=~[1, \bd{s}_{t-1}^\top]^\top \in \mathbb{R}^{K+1}$.

For each calendar type $c=c(t)$, let  $\bd{M}_{r,c} \bd{u}_r$ and $\bd{\Sigma}_{r,c}$ denote the mean and covariance matrix that characterize the conditional distribution of loads at time $t$ and the observation vectors. Specifically, assuming that there is no prior knowledge available for the loads, we have that
\begin{equation}
\label{eq:model2_MT}
    p(\bd{r}_{t}| \bd{s}_{t}) \propto p(\bd{s}_{t}| \bd{r}_{t}) = \set{N}(\bd{s}_t;  \bd{M}_{r,c} \bd{u}_r, \bd{\Sigma}_{r,c})
\end{equation}
where $\bd{M}_{r,c}\in \mathbb{R}^{K \times KR}$,  $\bd{\Sigma}_{r,c} \in \mathbb{R}^{K \times K}$, and $\bd{u}_r~=~u_{r}(\bd{r}_t)~\in \mathbb{R}^{KR}$ is the function that represents the observations. For instance, such a function could simply be $u_r(\bd{r})=\bd{r}$, or a more complex and possibly non-linear function of $\bd{r}$.

% \vspace{-0.3cm}
% 2. Learning
\subsection{Learning}
The vector-valued HMM for each time $t$ is defined by parameters 
\begin{equation}
    \Theta = \{ \bd{M}_{s,c}, \boldsymbol{\Sigma}_{s,c}, \bd{M}_{r,c}, \boldsymbol{\Sigma}_{r,c} : c = 1,2,...,C \}
    \label{eq:theta_parameters}
\end{equation}
where $\bd{M}_{s,c}, \boldsymbol{\Sigma}_{s,c}$ characterize the conditional distribution $p(\bd{s}_t|\bd{s}_{t-1})$ and $\bd{M}_{r,c}, \boldsymbol{\Sigma}_{r,c}$ characterize the conditional distribution $p(\bd{r}_t|\bd{s}_{t})$. Such parameters are updated every time new loads and observations are obtained for each calendar type $c$.

The parameters in $\Theta$ for each calendar type and conditional distribution can be obtained by maximizing the weighted log-likelihood of all the loads observed at times with the same calendar type. Specifically, for each calendar type \mbox{$c \in \{1, 2, ..., C\}$}, let $\bd{s}_{t_1}, \bd{s}_{t_2}, ..., \bd{s}_{t_i} \in \mathbb{R}^{K}$ be the vectors of loads of $K$ entities observed at times with such calendar type $c$, i.e., \mbox{$c = c(t_1) = c(t_2) = ... = c(t_i)$}. In addition, let $\bd{u}_{t_1}, \bd{u}_{t_2} , ..., \bd{u}_{t_i} \in \mathbb{R}^d$ be the corresponding feature vectors for parameters $\bd{M}_{s,c}$, $\boldsymbol{\Sigma}_{s,c}$ as given in \eqref{eq:model1_MT}, or for parameters \mbox{$\bd{M}_{r,c}$, $\boldsymbol{\Sigma}_{r,c}$} as given in \eqref{eq:model2_MT}. The estimates of these parameters are obtained by maximizing the exponentially weighted log-likelihood of the loads observed up to time $t_i$, given by
\begin{equation}
\bd{M}_i,\boldsymbol{\Sigma}_i\in \argmax_{\bd{M},\boldsymbol{\Sigma}} \sum_{j=1}^i \lambda^{i-j} \log \set{N}(\bd{s}_{t_j};  \bd{M} \bd{u}_{t_j}, \bd{\Sigma})
    \label{eq:weighted_loglikelihood}
\end{equation}
where $\lambda \in (0, 1)$ is the forgetting factor parameter that allows to increase the influence of the most recent data. The following theorem shows that the maximization of the weighted log-likelihood in \eqref{eq:weighted_loglikelihood} can be efficiently solved through a recursive process.

\begin{theorem}
\label{theorem1}
The matrices $\bd{M}_i$ and $\boldsymbol{\Sigma}_i$ that maximize the weighted log-likelihood given in \eqref{eq:weighted_loglikelihood} satisfy the following recursions
\begin{align}
    \bd{M}_i = &  \bd{M}_{i-1} + \frac{\bd{e}_i  \bd{u}_{t_i}^\top \bd{P}_{i-1}}{\lambda + \bd{u}_{t_i}^\top \bd{P}_{i-1}\bd{u}_{t_i}} \label{eq:mean_MT}\\
    \bd{\Sigma}_i  = & \bd{\Sigma}_{i-1} - \frac{1}{\bd{\gamma}_i} \Big( \bd{\Sigma}_{i-1}  -  \frac{\lambda^2 \bd{e}_{i} \bd{e}_{i}^\top }{(\lambda+\bd{u}_{t_i}^\top \bd{P}_{i-1}\bd{u}_{t_i})^2} \Big)   \label{eq:sigma_MT}
\end{align}
% \vspace{-0.3cm}
with 
% \vspace{-0.2cm}
\begin{align*}
    \bd{e}_i = & \ \bd{s}_{t_i}- \bd{M}_{i-1}\bd{u}_{t_i}\\
    \bd{P}_i = & \frac{1}{\lambda} \left( \bd{P}_{i-1} - \frac{\bd{P}_{i-1} \bd{u}_{t_i}  \bd{u}_{t_i}^\top \bd{P}_{i-1}}{ \lambda + \bd{u}_{t_i}^\top \bd{P}_{i-1} \bd{u}_{t_i}} \right)\\
    \gamma_i = & \lambda \gamma_{i-1} + 1.
\end{align*}
%where $\bd{M}_0 = \bd{0}$, $\boldsymbol{\Sigma}_0 = \bd{0}$, $\bd{P}_0 = \bd{I}_{d}$ and $\gamma_0 = 0$.
\end{theorem}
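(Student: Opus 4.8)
The plan is to read \eqref{eq:weighted_loglikelihood} as an exponentially weighted linear least–squares problem and to derive the three recursions in the classical recursive–least–squares style, handling $\bd{M}$ and $\bd{\Sigma}$ in turn. First I would expand the Gaussian log–density and observe that, for any fixed positive definite $\bd{\Sigma}$, the stationarity condition in $\bd{M}$ is the weighted normal equation $\bd{M}\,\bd{R}_i=\bd{C}_i$, where $\bd{R}_i=\sum_{j=1}^{i}\lambda^{i-j}\bd{u}_{t_j}\bd{u}_{t_j}^\top$ and $\bd{C}_i=\sum_{j=1}^{i}\lambda^{i-j}\bd{s}_{t_j}\bd{u}_{t_j}^\top$; since this condition does not involve $\bd{\Sigma}$, the joint maximizer is $\bd{M}_i=\bd{C}_i\bd{R}_i^{-1}$ together with $\bd{\Sigma}_i=\gamma_i^{-1}\bd{S}_i$, where $\bd{S}_i=\sum_{j=1}^{i}\lambda^{i-j}(\bd{s}_{t_j}-\bd{M}_i\bd{u}_{t_j})(\bd{s}_{t_j}-\bd{M}_i\bd{u}_{t_j})^\top$ and $\gamma_i=\sum_{j=1}^{i}\lambda^{i-j}$. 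Here I would also record the elementary recursions $\bd{R}_i=\lambda\bd{R}_{i-1}+\bd{u}_{t_i}\bd{u}_{t_i}^\top$, $\bd{C}_i=\lambda\bd{C}_{i-1}+\bd{s}_{t_i}\bd{u}_{t_i}^\top$ and $\gamma_i=\lambda\gamma_{i-1}+1$, and note that invertibility of $\bd{R}_i$ (equivalently of $\bd{P}_i:=\bd{R}_i^{-1}$) is ensured once enough feature directions have been observed, or by a regularized initialization $\bd{R}_0\succ\bd{0}$; the base case $i=1$ is direct. The same argument applies verbatim to both \eqref{eq:model1_MT} and \eqref{eq:model2_MT}, since the statement is phrased for a generic feature vector.

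The update for $\bd{P}_i$ is then just the Sherman–Morrison identity applied to $\bd{R}_i=\lambda\bd{R}_{i-1}+\bd{u}_{t_i}\bd{u}_{t_i}^\top$, which reproduces the stated recursion. For $\bd{M}_i$ I would substitute $\lambda\bd{C}_{i-1}=\lambda\bd{M}_{i-1}\bd{R}_{i-1}=\bd{M}_{i-1}(\bd{R}_i-\bd{u}_{t_i}\bd{u}_{t_i}^\top)$ into $\bd{C}_i=\lambda\bd{C}_{i-1}+\bd{s}_{t_i}\bd{u}_{t_i}^\top$, post–multiply by $\bd{P}_i$, and obtain $\bd{M}_i=\bd{M}_{i-1}+\bd{e}_i\bd{u}_{t_i}^\top\bd{P}_i$ with $\bd{e}_i=\bd{s}_{t_i}-\bd{M}_{i-1}\bd{u}_{t_i}$; a one–line simplification of $\bd{u}_{t_i}^\top\bd{P}_i$ using the $\bd{P}_i$ recursion rewrites the gain as $\bd{u}_{t_i}^\top\bd{P}_{i-1}/(\lambda+\bd{u}_{t_i}^\top\bd{P}_{i-1}\bd{u}_{t_i})$, which is exactly \eqref{eq:mean_MT}.

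The covariance recursion is the delicate part and the step I expect to be the main obstacle, because $\bd{M}_i$ itself changes from one step to the next, so $\bd{S}_i$ is not simply $\lambda\bd{S}_{i-1}$ plus a fresh residual term. My plan is to first eliminate the cross terms using the normal equation, which gives the projection identity $\bd{S}_i=\bd{D}_i-\bd{M}_i\bd{C}_i^\top$ with $\bd{D}_i=\sum_{j=1}^{i}\lambda^{i-j}\bd{s}_{t_j}\bd{s}_{t_j}^\top=\lambda\bd{D}_{i-1}+\bd{s}_{t_i}\bd{s}_{t_i}^\top$. Then I would form $\bd{S}_i-\lambda\bd{S}_{i-1}$, insert the recursions for $\bd{D}_i$, $\bd{C}_i$ and $\bd{M}_i$, and use the a–priori/a–posteriori error relation $\bd{s}_{t_i}-\bd{M}_i\bd{u}_{t_i}=\lambda\,\bd{e}_i/(\lambda+\bd{u}_{t_i}^\top\bd{P}_{i-1}\bd{u}_{t_i})$ together with $\bd{C}_{i-1}^\top=\bd{R}_{i-1}\bd{M}_{i-1}^\top$. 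The terms containing $\bd{M}_{i-1}$ should cancel, leaving a rank–one correction proportional to $\bd{e}_i\bd{e}_i^\top$ with the coefficient appearing in \eqref{eq:sigma_MT}; dividing by $\gamma_i$ and using $\lambda\gamma_{i-1}=\gamma_i-1$ then yields \eqref{eq:sigma_MT}. As a sanity check I would verify the scalar ($K=1$) instance of this last identity explicitly before writing it up, to make sure the exponents of $\lambda$ and of the denominator in the rank–one term are correct.
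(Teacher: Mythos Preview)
Your treatment of $\bd{M}_i$, $\bd{P}_i$ and $\gamma_i$ coincides with the paper's: solve the weighted normal equation, set $\bd{P}_i=\bd{R}_i^{-1}$, apply Sherman--Morrison, and substitute to get \eqref{eq:mean_MT}. The paper writes $\bd{Q}_i$ for your $\bd{C}_i$ and expands $\bd{M}_i=\bd{Q}_i\bd{P}_i$ directly, but the algebra is the same.

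For $\bd{\Sigma}_i$ your route diverges from the paper's, and this is where you will hit a snag. Your projection identity $\bd{S}_i=\bd{D}_i-\bd{M}_i\bd{C}_i^\top$ is correct, and if you carry your program through exactly (writing $\alpha_i=\lambda+\bd{u}_{t_i}^\top\bd{P}_{i-1}\bd{u}_{t_i}$) you obtain
\[
\bd{S}_i=\lambda\,\bd{S}_{i-1}+\frac{\lambda}{\alpha_i}\,\bd{e}_i\bd{e}_i^\top,
\]
so the rank-one coefficient is $\lambda/\alpha_i$, not $\lambda^2/\alpha_i^2$ as in \eqref{eq:sigma_MT}; equivalently, the exact residual increment is the product of the a-priori and a-posteriori errors, not the square of the a-posteriori error. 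Your proposed scalar sanity check flags this immediately (take $K=d=1$, $\lambda=1$, $u_{t_1}=u_{t_2}=1$, $s_{t_1}=0$, $s_{t_2}=1$: then $S_2=\tfrac12$, whereas $\lambda^2\alpha_2^{-2}e_2^2=\tfrac14$).

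The paper does not take your route. After writing $\gamma_i\bd{\Sigma}_i=\sum_{j\le i}\lambda^{i-j}(\bd{s}_{t_j}-\bd{M}_i\bd{u}_{t_j})(\bd{s}_{t_j}-\bd{M}_i\bd{u}_{t_j})^\top$, it simply replaces $\bd{M}_i$ by $\bd{M}_{i-1}$ inside the partial sum over $j\le i-1$, so that this block becomes $\lambda\gamma_{i-1}\bd{\Sigma}_{i-1}$, and then applies $\bd{s}_{t_i}-\bd{M}_i\bd{u}_{t_i}=(\lambda/\alpha_i)\bd{e}_i$ to the last term, which yields the $\lambda^2/\alpha_i^2$ factor. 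That substitution is an approximation, not an identity, so \eqref{eq:sigma_MT} is the recursion one gets by \emph{freezing} the past residuals at $\bd{M}_{i-1}$, not the exact MLE recursion your argument targets. If you want to reproduce the paper's statement verbatim you must make that same substitution explicitly; if you insist on the exact derivation, expect $\lambda/\alpha_i$ in place of $\lambda^2/\alpha_i^2$.
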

\begin{proof}
    See Appendix \ref{ap:proof_theorem1}.
\end{proof}

The equations above describe how to update the model parameters at each time $t$ and for each calendar type $c$. Such parameters are updated adding corrections to the previous parameters $\bd{M}_{i-1}$ and $\boldsymbol{\Sigma}_{i-1}$. These corrections are given by the fitting error $\bd{e}_{i}$ of the previous parameter, so that parameters are updated based on how they fit to the most recent data. In addition, the  recursive equations utilize matrix $\bd{P}_i$ and variable $\gamma_i$ commonly referred in adaptive recursive methods as the data autocorrelation matrix and cumulative forgetting factor \cite{liu2011kernel}. Matrix  $\bd{P}_i$  accounts for the uncertainties in the parameters estimation, while  variable $\gamma_i$ describes the rate of change for the covariance matrix. With the recursive equations \eqref{eq:mean_MT} and \eqref{eq:sigma_MT} the proposed methods obtain probabilistic models characterized by dynamic means and covariances. 

Theorem \ref{theorem1} generalizes the results obtained in \cite{alvarez2021probabilistic} for single-task load forecasting. The single-task learning method obtains a model for each entity, while the multi-task learning method obtains a joint model for the $K$ entities. In fact, the matrix of means $\bd{M}$ models a linear combination of the $K$ entities at each time $t$. In addition, the covariance matrix $\boldsymbol{\Sigma}$ not only contains load variance as in single-task learning, but also provides the correlation among different tasks. While the single-task method learns the mean parameters $\boldsymbol{\eta}_{s,c}\in \mathbb{R}^2$, $\boldsymbol{\eta}_{r,c}\in \mathbb{R}^R$ and standard deviations $\sigma_{s,c},\sigma_{r,c}\in\mathbb{R}$, the multi-task method learns a higher number of parameters for the mean matrices $\bd{M}_{s,c} \in \mathbb{R}^{K\times(K+1)}$, $\bd{M}_{r,c} \in \mathbb{R}^{K \times KR}$, and for the covariance matrices $\bd{\Sigma}_{s,c}, \bd{\Sigma}_{r,c} \in \mathbb{R}^{K\times K}$.

The covariance matrix captures relationships among entities, but estimating it accurately can be challenging, particularly with limited data. Certain elements of the covariance matrix~$\boldsymbol{\Sigma}_i$ can be set to zero to avoid spurious correlations due to the finite sample size. A common approach for obtaining sparse matrices is the thresholding method, which sets to zero small off-diagonal elements of the covariance matrix. However, in certain cases, the resulting covariance matrix may lose the positive definiteness \cite{guillot2012retaining}. In Section \ref{sec:experiments}, we use a simple method that ensures the resulting covariance matrix remains positive definite.

% \vspace{-0.2cm}
\subsection{Prediction}
The previous section details how to update HMM parameters using the most recent data recursively. The following theorem shows how to obtain probabilistic forecasts using these parameters.

\begin{theorem} 
\label{theorem}
Let $\left\{\bd{s}_t, \bd{r}_t \right\}_{t\geq 1}$ be an HMM characterized by parameters $\Theta$ defined in \eqref{eq:theta_parameters}. Then, for $i=1,2,...,L$
\begin{equation}
\label{eq:theorem}
    p(\bd{s}_{t+i}|\bd{s}_t, \bd{r}_{t+1:t+i}) = \set{N}(\bd{s}_{t+i}; \widehat{\bd{s}}_{t+i}, \widehat{\bd{E}}_{t+i})
\end{equation}
where mean $\widehat{\bd{s}}_{t+i}$ and covariance matrix $\widehat{\bd{E}}_{t+i}$ can be recursively obtained by
\begin{align}
    \widehat{\bd{s}}_{t+i}  = & \bd{W}_1 (\bd{W}_1+ \bd{W}_2)^{-1}\bd{M}_{r,c}\bd{u}_r  + \nonumber \\
    &  \bd{W}_2 (\bd{W}_1+\bd{W}_2)^{-1} \bd{M}_{s,c}\widehat{\bd{u}}_{s} \label{eq:pred_mean}\\
   \widehat{\bd{E}}_{t+i}  = &  \bd{W}_2 (\bd{W}_1 +\bd{W}_2)^{-1}\bd{W}_1 \label{eq:pred_error}
\end{align}
with 
\begin{align*}
\bd{W}_1 = &\bd{\Sigma}_{s,c}+ \bd{M}_{s,c}\bd{N} \widehat{\bd{E}}_{t+i-1}(\bd{M}_{s,c}\bd{N})^\top , \ \ \bd{W}_2 = \bd{\Sigma}_{r,c}, \nonumber\\ 
c  =  c(t+&i), \widehat{\bd{u}}_s = [1, \widehat{\bd{s}}_{t+i-1}^\top]^\top, \bd{u}_r = u_r(\bd{r}_{t+i}), 
\bd{N} =\renewcommand{\arraystretch}{0.5} % Ajuste la separación vertical entre filas
\begin{bmatrix}
\bd{0} \\
\bd{I}_K
\end{bmatrix} \nonumber
\end{align*}
% \vspace{-0.1cm}
and initial values given by $\widehat{\bd{s}}_t = \bd{s}_t$ and $\widehat{\bd{E}}_t = \bd{0}$.  
\end{theorem}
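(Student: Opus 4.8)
The plan is to proceed by induction on $i$, following the classical two-stage ``predict--update'' decomposition of Bayesian filtering for the linear--Gaussian state-space model specified by \eqref{eq:model1_MT}--\eqref{eq:model2_MT}. The base case is $i=0$, where the convention $\widehat{\bd{s}}_t=\bd{s}_t$, $\widehat{\bd{E}}_t=\bd{0}$ simply records that, conditioned on $\bd{s}_t$, the state at time $t$ is known and its (degenerate) law has exactly these moments. For the inductive step I would assume $p(\bd{s}_{t+i-1}|\bd{s}_t,\bd{r}_{t+1:t+i-1})=\set{N}(\bd{s}_{t+i-1};\widehat{\bd{s}}_{t+i-1},\widehat{\bd{E}}_{t+i-1})$ and establish the statement for index $i$.

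\emph{Predict step.} Using the Markov structure of the HMM in Fig.~\ref{fig:graphicalmodel} (conditioned on $\bd{s}_{t+i-1}$, the state $\bd{s}_{t+i}$ is independent of $\bd{s}_t,\bd{r}_{t+1:t+i-1}$) one has
\[ p(\bd{s}_{t+i}|\bd{s}_t,\bd{r}_{t+1:t+i-1})=\int p(\bd{s}_{t+i}|\bd{s}_{t+i-1})\,p(\bd{s}_{t+i-1}|\bd{s}_t,\bd{r}_{t+1:t+i-1})\,d\bd{s}_{t+i-1}. \]
From \eqref{eq:model1_MT}, writing $\bd{u}_s=[1,\bd{s}_{t+i-1}^\top]^\top=[1,\bd{0}^\top]^\top+\bd{N}\bd{s}_{t+i-1}$ with $\bd{N}=[\bd{0};\bd{I}_K]$, the transition is affine--Gaussian in $\bd{s}_{t+i-1}$ with linear part $\bd{M}_{s,c}\bd{N}$ and noise covariance $\bd{\Sigma}_{s,c}$. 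Pushing the Gaussian inductive hypothesis through this affine--Gaussian map (the standard identity: $X\sim\set{N}(\mu,\Sigma)$, $Y|X\sim\set{N}(AX+b,R)$ gives $Y\sim\set{N}(A\mu+b,A\Sigma A^\top+R)$) shows $p(\bd{s}_{t+i}|\bd{s}_t,\bd{r}_{t+1:t+i-1})$ is Gaussian with mean $\bd{M}_{s,c}\widehat{\bd{u}}_s$, $\widehat{\bd{u}}_s=[1,\widehat{\bd{s}}_{t+i-1}^\top]^\top$, and covariance $\bd{\Sigma}_{s,c}+\bd{M}_{s,c}\bd{N}\,\widehat{\bd{E}}_{t+i-1}\,(\bd{M}_{s,c}\bd{N})^\top=\bd{W}_1$.

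\emph{Update step.} Conditioning additionally on $\bd{r}_{t+i}$ and using that $\bd{r}_{t+i}$ depends only on $\bd{s}_{t+i}$, Bayes' rule yields $p(\bd{s}_{t+i}|\bd{s}_t,\bd{r}_{t+1:t+i})\propto p(\bd{r}_{t+i}|\bd{s}_{t+i})\,p(\bd{s}_{t+i}|\bd{s}_t,\bd{r}_{t+1:t+i-1})$, the proportionality being in $\bd{s}_{t+i}$. By \eqref{eq:model2_MT}, $p(\bd{r}_{t+i}|\bd{s}_{t+i})$ is proportional in $\bd{s}_{t+i}$ to $\set{N}(\bd{s}_{t+i};\bd{M}_{r,c}\bd{u}_r,\bd{W}_2)$ with $\bd{W}_2=\bd{\Sigma}_{r,c}$. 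Hence the target posterior is the normalized product of two Gaussians in $\bd{s}_{t+i}$, which is Gaussian with precision $\bd{W}_1^{-1}+\bd{W}_2^{-1}$ and mean $(\bd{W}_1^{-1}+\bd{W}_2^{-1})^{-1}(\bd{W}_1^{-1}\bd{M}_{s,c}\widehat{\bd{u}}_s+\bd{W}_2^{-1}\bd{M}_{r,c}\bd{u}_r)$. It then remains to invoke the parallel-sum identities $(\bd{W}_1^{-1}+\bd{W}_2^{-1})^{-1}=\bd{W}_2(\bd{W}_1+\bd{W}_2)^{-1}\bd{W}_1=\bd{W}_1(\bd{W}_1+\bd{W}_2)^{-1}\bd{W}_2$ together with their one-sided versions $(\bd{W}_1^{-1}+\bd{W}_2^{-1})^{-1}\bd{W}_1^{-1}=\bd{W}_2(\bd{W}_1+\bd{W}_2)^{-1}$ and $(\bd{W}_1^{-1}+\bd{W}_2^{-1})^{-1}\bd{W}_2^{-1}=\bd{W}_1(\bd{W}_1+\bd{W}_2)^{-1}$, which transform the precision and mean above into exactly \eqref{eq:pred_error} and \eqref{eq:pred_mean}; this closes the induction.

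The routine but delicate bookkeeping will be the affine split of $\bd{u}_s$ that makes the selector matrix $\bd{N}$ appear (separating the column of $\bd{M}_{s,c}$ that multiplies the constant $1$ from those that multiply $\bd{s}_{t+i-1}$), and checking invertibility of $\bd{W}_1$, $\bd{W}_2$ and $\bd{W}_1+\bd{W}_2$, which follows from positive definiteness of $\bd{\Sigma}_{s,c},\bd{\Sigma}_{r,c}$ and positive semidefiniteness of $\widehat{\bd{E}}_{t+i-1}$. The only genuinely substantive part, and the expected main obstacle, is the Gaussian-product (complete-the-square) computation together with the matrix parallel-sum identities; one should also explicitly justify treating $p(\bd{r}_{t+i}|\bd{s}_{t+i})$ as an unnormalized Gaussian in $\bd{s}_{t+i}$, which is precisely the modeling assumption in \eqref{eq:model2_MT} and is harmless here since the posterior is renormalized.
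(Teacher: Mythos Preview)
Your proposal is correct and follows essentially the same inductive predict--update argument as the paper. The paper packages the two Gaussian facts you invoke (affine--Gaussian pushforward for the predict step and product of Gaussians for the update step) into a single lemma of the form $\set{N}(\bd{x};\bd{a},\bd{B})\,\set{N}(\bd{y};\bd{C}\bd{x},\bd{D})=\set{N}(\bd{x};\bd{E}\bd{e},\bd{E})\,\set{N}(\bd{y};\bd{C}\bd{a},\bd{D}+\bd{C}\bd{B}\bd{C}^\top)$ and applies it twice, starting the induction at $i=1$ rather than at your degenerate $i=0$; your explicit use of the parallel-sum identities to pass from $(\bd{W}_1^{-1}+\bd{W}_2^{-1})^{-1}$ to the stated forms \eqref{eq:pred_mean}--\eqref{eq:pred_error} is a detail the paper leaves implicit.
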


\begin{proof}
    See Appendix \ref{ap:proof_theorem2}.
\end{proof}
 The above theorem enables to recursively obtain probabilistic forecasts $\set{N}(\bd{s}_{t+i}; \widehat{\bd{s}}_{t+i}, \widehat{\bd{E}}_{t+i})$ for $i = 1, 2, \ldots, L$ that allow us to quantify the probability of forecast intervals (see Fig.~\ref{fig:prediction_intervals}). Theorem~\ref{theorem} provides load forecasts $\widehat{\bd{s}}_{t+i}$ and the estimates of their uncertainty $\widehat{\bd{E}}_{t+i}$, for \mbox{$i=1,2,...,L$}. Such forecasts are obtained using the recursions \eqref{eq:pred_mean} and \eqref{eq:pred_error} with the most recent parameters every time new instance vectors $\bd{x}$ are obtained. Specifically, the probabilistic forecasts $\set{N}(\bd{s}_{t+i}; \widehat{\bd{s}}_{t+i}, \widehat{\bd{E}}_{t+i})$ at time $t+i$ for $i=1,2,...,L$ are obtained using the probabilistic forecasts $\set{N}(\bd{s}_{t+i-1}; \widehat{\bd{s}}_{t+i-1}, \widehat{\bd{E}}_{t+i-1})$ at previous time, observations vectors $\bd{r}_{t+i}$ at time $t+i$ and the model parameters corresponding to the calendar type at time $t+i$, denoted as $c=c(t+i)$. 

\begin{figure}
\centering
        \centering
        \psfrag{data1}[l][l][0.5]{$\widehat{\bd{s}}_t \pm 4\widehat{\bd{E}}_{t}$}
        \psfrag{data2}[l][l][0.5]{$\widehat{\bd{s}}_t \pm 2\widehat{\bd{E}}_{t}$}
        \psfrag{data3}[l][l][0.5]{Load forecast $\hat{\bd{s}}_t$}
        \psfrag{abcdefghijklmnopqrstuvwxyz}[l][l][0.5]{Load $\bd{s}_t$}
        \psfrag{Error}[t][][0.7]{Error [MW]}
        \psfrag{0}[][][0.7]{0}        
        \psfrag{8}[][][0.6]{8\hspace{2mm}}        
        \psfrag{10}[][][0.6]{10\hspace{2mm}}
        \psfrag{12}[][][0.6]{12\hspace{2mm}}
        \psfrag{14}[][][0.6]{14\hspace{2mm}} 
        \psfrag{16}[][][0.6]{16\hspace{2mm}}   
        \psfrag{24}[][][0.6]{24} 
        \psfrag{48}[][][0.6]{48}   
        \psfrag{72}[][][0.6]{72}   
        \psfrag{96}[][][0.6]{96}   
        \psfrag{120}[][][0.6]{120}   
        \psfrag{144}[][][0.6]{144}   
        \psfrag{hours}[t][][0.7]{Hours [h]}   
        \psfrag{loadconsumption}[b][][0.7]{Load consumption [GW]}   
        \includegraphics[width=0.5\textwidth]{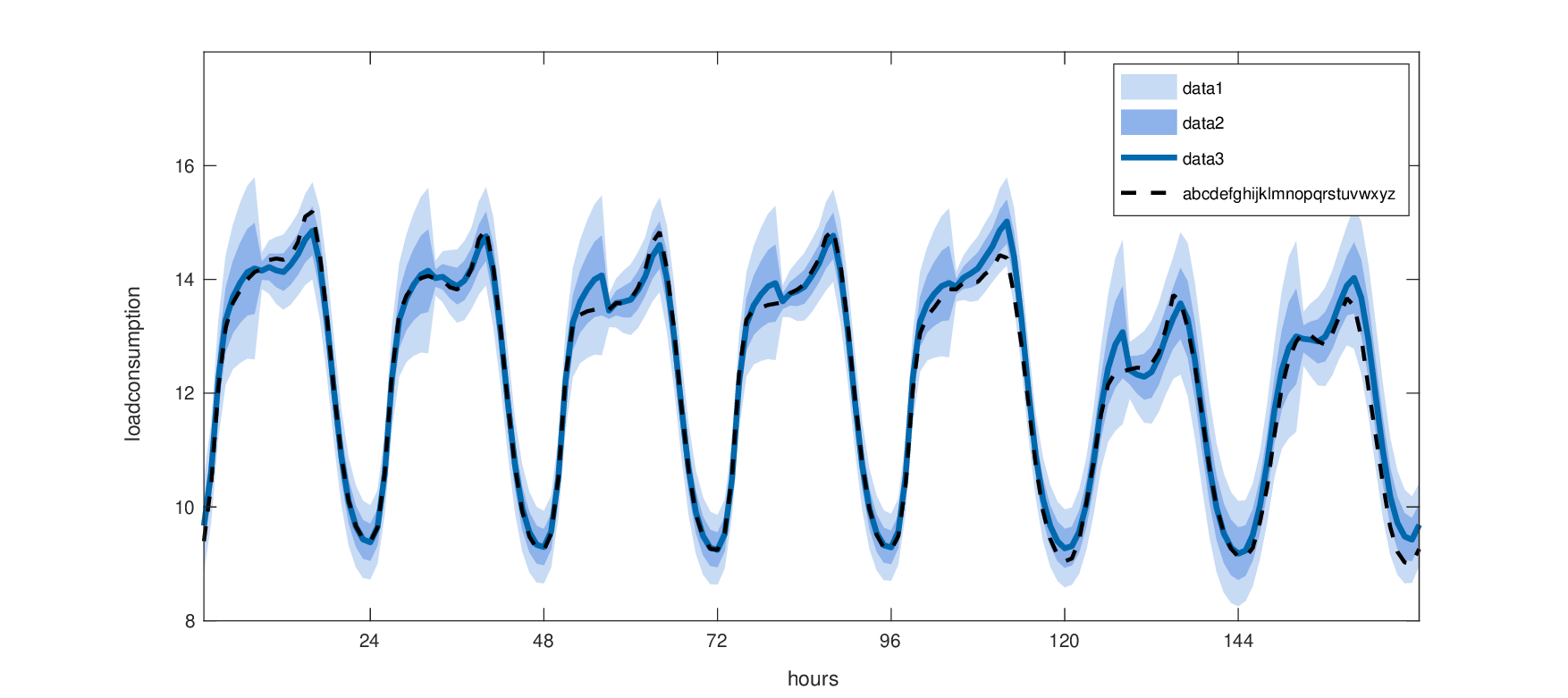}
    \caption{\mbox{Multi-APLF} method obtains load forecasts together with reliable uncertainty assessments of load demand.}
    \label{fig:prediction_intervals}
\end{figure}

Single-task forecasts in \cite{alvarez2021probabilistic} depend on each entity’s model and past predictions, while the presented multi-task approach incorporates models and predictions from multiple entities to capture dependencies among them. Specifically, as shown in~\eqref{eq:pred_mean}, the load forecast $\widehat{\bd{s}}_{t+i}$ is given by the linear combination of $\bd{M}_{r,c}\bd{u}_r$ and $\bd{M}_{s,c}\widehat{\bd{u}}_s$ that are given by models and previous predictions for all entities.  In addition, the weights $\bd{W}_1$ and $\bd{W}_2$ are given by the relationships among entities represented by the covariance matrices $\boldsymbol{\Sigma}_{s,c}$ and $\boldsymbol{\Sigma}_{r,c}$. Such matrices $\bd{W}_1$ and $\bd{W}_2$ balance the information gathered from the new observations (term $\widehat{\bd{u}}_r$) and the previous prediction (term $\widehat{\bd{u}}_s$). For instance, when $\bd{W}_1 \gg \bd{W}_2$, the resulting prediction is dominated by the observation-based term, since $\bd{W}_1(\bd{W}_1 + \bd{W}_2)^{-1} \approx \bd{I}$ and $\bd{W}_2(\bd{W}_1 + \bd{W}_2)^{-1} \approx \bd{0}$. The covariance matrix $\widehat{\bd{E}}_{t+i}$ describes the estimated uncertainty in the forecasts and how forecasts for one entity are influenced by other entities. In particular, each diagonal term $(\widehat{\bd{E}}_{t+i})_{j,j}$ describes the variance for the forecasts for the $j$-th entity, while each non-diagonal term $(\widehat{\bd{E}}_{t+i})_{j,k}$ describes the correlation between the forecasts for the $j$-th and $k$-th entities. 

The theoretical results above describe the learning and prediction steps of the proposed method. The subsequent section details the implementation steps using the proposed techniques.

 % \vspace{-0.3cm}
\subsection{Implementation}

The proposed learning methods continuously update models with the most recent data samples to adapt to changes in consumption patterns over time. In particular, the proposed method recursively updates the model parameters with the most recent data using \eqref{eq:mean_MT} and \eqref{eq:sigma_MT}, and provides probabilistic forecasts using the recursions defined in Theorem~\ref{theorem}.

\begin{figure}
\centering
        \includegraphics[width=0.45\textwidth]{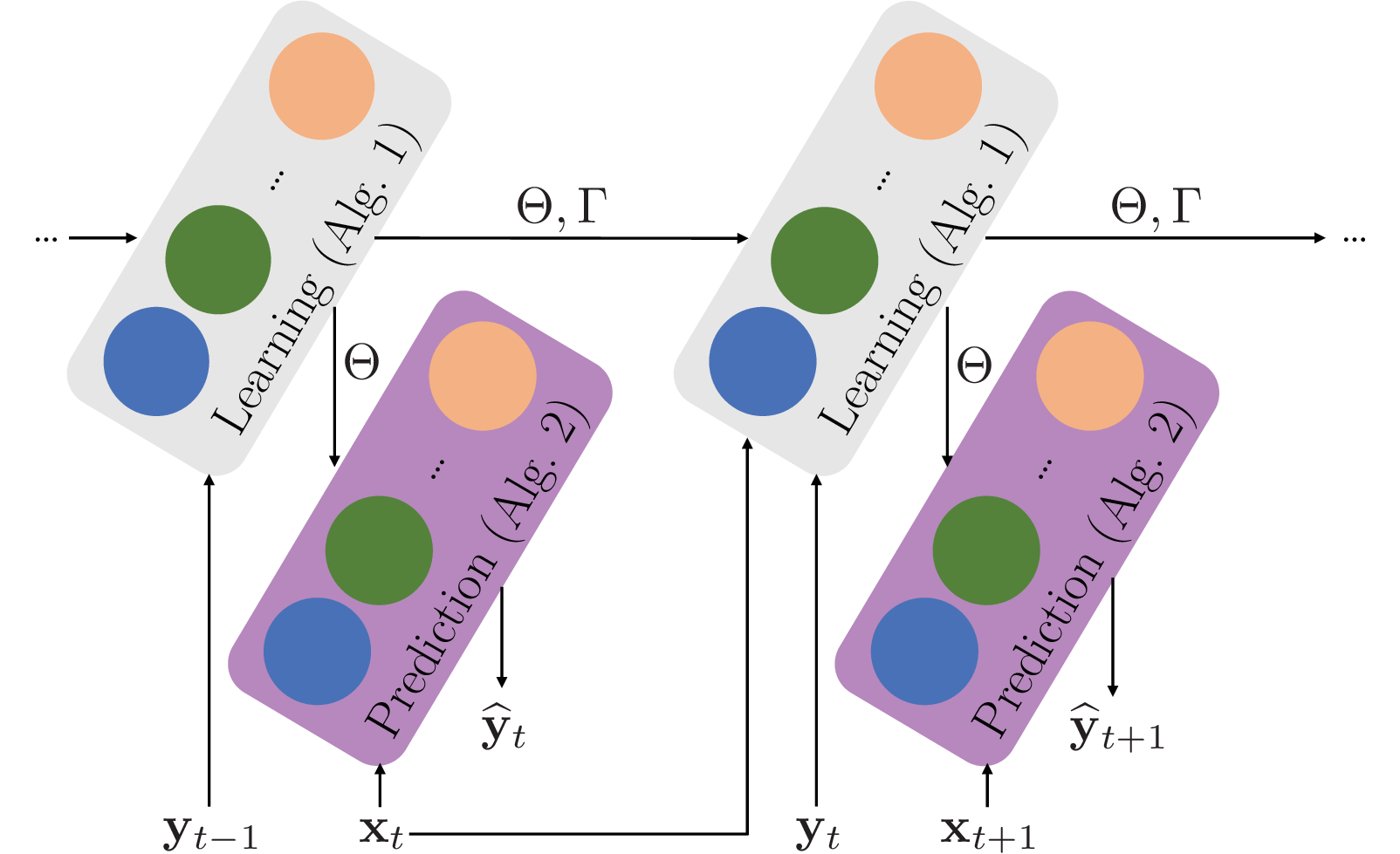}
    \caption{Diagram of the proposed Multi-APLF methods.}
    \label{fig:diagram}
\end{figure}

 Figure~\ref{fig:diagram} depicts the flow diagram of the proposed method that iteratively performs the learning and prediction steps. The proposed methods train
models regularly using the most recent samples from multiple entities. Then, such methods predict for each new instance vector using the latest learned model. At learning, training samples from multiple entities and the previous model are used to obtain a new model. At prediction, the latest learned model and the instance vector $\bd{x}_t$ are used to obtain load forecasts $\widehat{\bd{y}}_t$ at time $t$. 

The learning step is described in Algorithm \ref{alg:ALG1}. The method updates the new model using the instance vector of $K$ entities $\bd{x}_t$, the actual loads of $K$ entities $\bd{y}_t$, and the previous model. Algorithm \ref{alg:ALG1} updates  parameters $\bd{M}_{s,c}, \boldsymbol{\Sigma}_{s,c}, \bd{M}_{r,c}, \boldsymbol{\Sigma}_{r,c}$ as well as state variables $\bd{P}_{s,c}, \gamma_{s,c}, \bd{P}_{r,c}, \gamma_{r,c}$ with recursions given in \eqref{eq:mean_MT} and \eqref{eq:sigma_MT}.

\begin{algorithm}[h!]
        \caption{Learning step}
        \label{alg:ALG1}
        \begin{algorithmic}
        \Require{ 
\begin{tabular}[t]{ll}
      $\Theta = \{ \bd{M}_{s,c}, \boldsymbol{\Sigma}_{s,c}, \bd{M}_{r,c}, \boldsymbol{\Sigma}_{r,c} \}$ & model parameters \\
      $\Gamma = \{ \bd{P}_{s,c}, \gamma_{s,c}, \bd{P}_{r,c}, \gamma_{r,c} \}$ & state variables \\
      $\lambda_s, \lambda_r$ & forgetting factors \\
      $\bd{x}_t$ & new instance \\
      $\bd{y}_t$ & new loads vector \\
      $L$ & prediction horizon\\
      $t$ & time
    \end{tabular} \qquad}
    
        \Ensure{ \begin{tabular}[t]{ll}
      $\Theta$ & updated parameters \\
      $\Gamma$ & updated state variables 
    \end{tabular} \qquad}
    \vskip 4pt
            \ForAll{ $i = 1,2,...,L$}
            \vskip 2pt
            \State {$c \gets c(t+i)$ }
            \vskip 2pt
            \State {$\bd{u}_s \gets [1, \bd{s}_{t+i-1}^\top] ^\top$}
            \vskip 2pt
            \State {$\bd{u}_r \gets u_r(\bd{r}_{t+i})$}
            \vskip 2pt
            \ForAll  {$j=s,r$}
            \State {$\bd{e}_{j,c} \gets \bd{s}_{t+i} - \bd{M}_{j,c} \bd{u}_j$}
            \State    $\bd{P}_{j,c} \gets \dfrac{1}{\lambda_j} \left( \bd{P}_{j,c} -  \dfrac{\bd{P}_{j,c} \bd{u}_j \bd{u}_j^\top \bd{P}_{j,c}}{ \lambda + \bd{u}_j^\top \bd{P}_{j,c} \bd{u}_j} \right)$
            \State    $\gamma_{j,c} \gets \lambda_j \bd{\gamma}_{j,c} + 1$
            \State    $\bd{\Sigma}_{j,c} \gets  \bd{\Sigma}_{j,c} - \dfrac{1}{\bd{\gamma}_{j,c}} \Big( \bd{\Sigma}_{j,c}  - \dfrac{\lambda_j^2  \bd{e}_{j,c} \bd{e}_{j,c}^\top }{(\lambda_j+\bd{u}_j^\top\bd{P}_{j,c}\bd{u}_j)^2} \Big)$
            \State    $\bd{M}_{j,c}\gets \bd{M}_{j,c} + \dfrac{\bd{e}_{j,c} \bd{u}_j^\top \bd{P}_{j,c}}{\lambda_j + \bd{u}_j^\top \bd{P}_{j,c}^\top\bd{u}_j}$
            \EndFor 
            \EndFor
        \end{algorithmic}
    \end{algorithm}

The prediction step is described in Algorithm \ref{alg:ALG2}. The method uses the latest learned model to obtain load forecasts for $K$ entities $\widehat{\bd{y}}_t = [ \widehat{\bd{s}}_{t+1}^\top, \widehat{\bd{s}}_{t+2}^\top, ..., \widehat{\bd{s}}_{t+L}^\top]^\top$, and estimated covariances $\widehat{\bd{E}}_{t+1}, \widehat{\bd{E}}_{t+2}, ..., \widehat{\bd{E}}_{t+L}$ that determine probabilistic forecasts $p(\bd{s}_{t+i}|\bd{s}_t, \bd{r}_{t+1:t+i})$ for $i=1,2,...,L$.

\begin{algorithm}
        \caption{Prediction step}
        \label{alg:ALG2}
        \begin{algorithmic}
        \Require{ 
\begin{tabular}[t]{ll}
      $\Theta$ & \  model parameters \\
      $\bd{x}_t = [\bd{s}_t^\top, \bd{r}_{t+1}^\top,\bd{r}_{t+2}^\top, ..., \bd{r}_{t+L}^\top ]^\top $ & \ new instance \\
      $t$ & \ time
    \end{tabular} \qquad}
        \Ensure{ \begin{tabular}[t]{ll}
      $\widehat{\bd{y}}_t = [\widehat{\bd{s}}_{t+1}^\top,\widehat{\bd{s}}_{t+2}^\top, ..., \widehat{\bd{s}}_{t+L}^\top]^\top$ & load forecast \\
      $[\widehat{\bd{E}}_{t+1},\widehat{\bd{E}}_{t+2}, ..., \widehat{\bd{E}}_{t+L} ]^\top$ &  error covariances\\
      $\{ \set{N}(\bd{s}_{t+i}; \widehat{\bd{s}}_{t+i}, \widehat{\bd{E}}_{t+i})\}_{i=1}^L$ & prob. forecasts
    \end{tabular} \qquad}
            \State $\widehat{\bd{s}}_t \gets \bd{s}_t $
            \vskip 2pt
            \State $\widehat{\bd{E}}_t \gets \bd{0}$
            \vskip 2pt
            \State $\bd{N} \gets 
            \begin{small}
                \renewcommand{\arraystretch}{0.4} % Ajuste la separación vertical entre filas
                \begin{bmatrix}
                 \bd{0} \\
                 \bd{I}_K
                \end{bmatrix}
                \end{small}$
                \vskip 4pt
            \ForAll{ $i = 1,2,...,L$}
            \vskip 2pt
            \State {$c \gets c(t+i)$ }
            \vskip 2pt
            \State {$\hat{\bd{u}}_s \gets [1, \widehat{\bd{s}}_{t+i-1}^\top] ^\top$}
            \vskip 2pt
            \State {$\bd{u}_r \gets u_r(\bd{r}_{t+i})$}
            \begin{flalign*}
            \hspace{0.4cm} \bd{W}_1  \gets & \bd{\Sigma}_{s,c}+ \bd{M}_{s,c}\bd{N} \widehat{\bd{E}}_{t+i-1}(\bd{M}_{s,c}\bd{N})^\top & \\
            \hspace{0.4cm}\bd{W}_2 \gets&  \bd{\Sigma}_{r,c}&\\
            \hspace{0.4cm}  \widehat{\bd{s}}_{t+i}  \gets & \bd{W}_1 (\bd{W}_1+ \bd{W}_2)^{-1}\bd{M}_{r,c}\bd{u}_r  +&\\
            \hspace{0.4cm}&  \bd{W}_2 (\bd{W}_1+\bd{W}_2)^{-1} \bd{M}_{s,c}\widehat{\bd{u}}_{s}& \\
           \hspace{0.4cm}  \widehat{\bd{E}}_{t+i} \gets & \bd{W}_2 (\bd{W}_1 +\bd{W}_2)^{-1}\bd{W}_1&
            \end{flalign*}
            \EndFor 
        \end{algorithmic}
    \end{algorithm}

\begin{table}[b]
 % \vspace{-0.4cm}
\caption{Complexity of existing techniques.}
\label{tab:complexity}
\setlength{\tabcolsep}{3pt} % Reduce column spacing
\renewcommand{\arraystretch}{1.2} % Adjust row height
\resizebox{\columnwidth}{!}{%
\begin{tabular}{lcccc}
\toprule
{Method} & {Learning} &  {Prediction} &  {Memory} \\ \hline
 {MTGP} & $O(K^3N^3)$ & $O(LK^2N^2)$ & $O(K^2N^2)$ \\
 {VAR}  & $O(K^3M^3 + K^2M^2N)$ & $O(LK^2M^2)$ & $O(K^2M)$ \\
 {MLR}  & $O(K^3M^3 + K^2M^2N)$ & $O(LK^2M)$ & $O(K^2M)$ \\
 {\mbox{Multi-APLF}} & $O(LK^2)$ & $O(LK^3R^3)$ & $O(C(K^2+ R^2))$ \\
\bottomrule
\end{tabular}%
}
\end{table}
The running times of Algorithm~1 and Algorithm~2 are suitable for real-time implementation with very low latency. Table~\ref{tab:complexity} compares the computational and memory complexity of the proposed \mbox{Multi-APLF} with the multi-task learning methods MTGP~\cite{Zhang2014}, MLR, and the Vector Autoregressive model~\cite{lutkepohl2005new} (VAR). The proposed \mbox{Multi-APLF} method is designed for online adaptive learning, where the model is updated incrementally at each time step. While this requires continual model updates, each update has low computational cost, with complexity that grows only quadratically with the number of entities $K$, and remains constant with respect to the number of historical load data $N$. In contrast, baseline methods such as MTGP, MLR, and VAR rely on offline training, fitting a static model using a large training set of size $N$, and using it unchanged for future predictions. These offline methods have significantly higher training costs, with computational complexity that scales cubically with both $K$ and $N$ (e.g., $O(K^3N^3)$ for MTGP) and memory complexity up to $O(K^2N^2)$ due to kernel matrix storage. Although offline training is performed only once, maintaining predictive accuracy over time typically requires frequent retraining to adapt to nonstationary conditions. In contrast, \mbox{Multi-APLF}'s lightweight updates allow efficient adaptation without retraining, making it substantially more scalable in real-time and large-scale multitask forecasting scenarios.

The contents presented in this section detail the implementation of the proposed method, including the specific algorithms used and a comparison of the computational and memory complexity with existing methods. The corresponding Python source code is publicly available in \url{https://github.com/MachineLearningBCAM/Multitask-load-forecasting-IEEE-TPWRS-2025}. The following section presents the experimental results that show the performance improvement achieved by the proposed techniques.

\section{Experiments}
\label{sec:experiments}

 This section first describes the datasets used for
the numerical results and the experimental setup, and then
compares the proposed method’s performance with that of
existing techniques.

\subsection{Datasets and experimental setup}
\label{experimental_setup}
Five publicly available load consumption datasets are used in the numerical experiments: a dataset from the Global Energy Forecasting Competition 2017 (GEFCom) with load demand of eight regions and weather information \cite{hong2019global}; a dataset from New England made available by the ISO New England organization \cite{ISONE} that registers hourly load demand, electricity cost information, weather data, and system load for the ISO New England Control Area and its eight load zones in 2022; a dataset from a regional transmission organization in the United States (PJM Interconnection LLC \cite{PJM2023}), which collects load consumption data from eight regions in 2023;  Australian Electricity Demand dataset, \textcolor{black}{which compiles the load demand for five Australian cities \cite{datasetAustralia}, depicted in Figure~\ref{fig:patterns}};  and a dataset from New South Wales \cite{dataset400buildings} that registers load demand for 400 buildings in 2013.

\begin{table*}[]
\caption{RMSE [GW] and MAPE [\%] for prediction errors for the proposed method and state-of-the-art techniques.}
% \small
\label{tab:experiments}
\resizebox{\textwidth}{!}{%
\begin{tabular}{ccl|rr|rr|rr|rr|rr|rr}
\toprule
 &     &     & \multicolumn{2}{c|}{{APLF}}& \multicolumn{2}{c|}{{N-HITS}} & \multicolumn{2}{c|}{{MTGP}}& \multicolumn{2}{c|}{{VAR}}& \multicolumn{2}{c|}{{MLR}}& \multicolumn{2}{c}{{\mbox{Multi-APLF}}}\\
&      &    & \multicolumn{1}{c}{{[\%]}} & \multicolumn{1}{c|}{{[GW]}} & \multicolumn{1}{c}{{[\%]}} & \multicolumn{1}{c|}{{[GW]}} & \multicolumn{1}{c}{{[\%]}} & \multicolumn{1}{c|}{{[GW]}} & \multicolumn{1}{c}{{[\%]}} & \multicolumn{1}{c|}{{[GW]}} & \multicolumn{1}{c}{{[\%]}} & \multicolumn{1}{c|}{{[GW]}}& \multicolumn{1}{c}{{[\%]}} & \multicolumn{1}{c}{{[GW]}} \\ 
\midrule
\multirow{9}{*}{\rotatebox{90}{{GEFCom}}} & \multirow{8}{*}{\rotatebox{90}{{Entity}}} & 1 & 5.67 & 0.26 & 7.52 & 0.35 & 6.82 & 0.30 & 7.07 & 0.34 &  11.01 & 0.62 & \textbf{5.32} & \textbf{0.25}\\
 & & 2 & 3.63 & 0.06 & 4.77 & 0.07 & 4.52& 0.07& 5.28 & 0.09& 6.87 & 0.13  & \textbf{3.40}& \textbf{0.05}\\
& & 3 & 5.09 & 0.19 & 6.72 & 0.25  & 6.15 & 0.24& 6.97 & 0.26 & 8.92 & 0.40 & \textbf{4.49}& \textbf{0.17}            \\
& & 4 & 4.62 & 0.08  & 6.52 & 0.11  & 5.95 & 0.11& 6.45  &   0.11 & 9.83 & 0.21& \textbf{4.19}& \textbf{0.07}            \\
&  & 5 & 5.29 & \textbf{0.06} & 6.83 & 0.08 & 5.96 & 0.07&  6.65  & 0.08 &  10.99 &  0.16  & \textbf{4.76}& \textbf{0.06}            \\
&   &  6 & 5.87 & 0.13 & 7.16 & 0.16 & 6.34& 0.14& 7.22  & 0.16 & 12.75 & 0.35  & \textbf{5.21}& \textbf{0.12}            \\
&  &  7 & 5.11 & 0.04 & 6.05 & 0.04 & 5.63 & 0.04 & 6.48 & 0.05& 8.66 & 0.07  & \textbf{4.84}& \textbf{0.03}            \\
& &  8 & 5.32 & 0.13 & 6.29  & 0.16  & 6.78 & 0.16 &  7.18 & 0.19 & 9.63 & 0.29 & \textbf{4.70}& \textbf{0.12}            \\
% \cline{2-15} 
& \multicolumn{2}{c|}{{Average}}& 5.08 & 0.12 & 6.48 & 0.15 & 6.38 & 0.15 &  6.70 & 0.17 & 9.83  & 0.28  & \textbf{4.61}& \textbf{0.11}            \\
\hline
\multirow{9}{*}{\rotatebox{90}{{NewEngland}}} & \multirow{8}{*}{\rotatebox{90}{{Entity}}} &  1 & 10.19 & 0.13 & 10.47 & 0.14 &11.69 & 0.15 &  12.91 & 0.16 & 16.15 & 0.23  & \textbf{9.73}& \textbf{0.12}            \\
 & & 2 & 6.42 & 0.10 & 7.32 & 0.11  & 8.23 & 0.12 &  7.44  & 0.11 & 11.50 & 0.20 & \textbf{5.77}& \textbf{0.09}            \\
 & & 3 & 12.35 & 0.07 & 16.30 & 0.09 & 16.35 & 0.12 &  19.68   & 0.10 & 31.51 & 0.15  & \textbf{11.90}& \textbf{0.06}    \\
 & & 4 & 6.57 & 0.27 & 7.33 & 0.31 & 9.23 & 0.37 &  7.63&  0.32 & 12.72 & 0.61  & \textbf{6.19} & \textbf{0.25}            \\
& & 5 & 8.61 & 0.09 &  9.54 & 0.10 & 10.53 & 0.11 & 9.62 &  0.11 & 15.79 & 0.20  & \textbf{7.72}& \textbf{0.08}            \\
& & 6 & 7.69 & \textbf{0.15}& 8.36 & 0.16 & 11.95 & 0.24 & 8.85 &  0.17 & 14.81 & 0.34  & \textbf{7.11}& \textbf{0.15}            \\
& & 7 & 9.17 & 0.23 & 8.95 & 0.23  & 9.44 & 0.24 & 10.32  & 0.26  & 18.31 & 0.50  & \textbf{7.91} & \textbf{0.20}            \\
  & &  8 & 5.66 & 0.19 & 6.42 & 0.22  & 8.21 & 0.26 & 6.93 & 0.24 & 11.41 & 0.42 & \textbf{5.20} & \textbf{0.17}            \\
% \cline{2-15} 
 & \multicolumn{2}{c|}{{Average}}    & 8.33 & 0.15 & 9.34 & 0.17 & 10.70 & 0.20 &  10.42 & 0.18 & 16.53 & 0.33   & \textbf{7.69}& \textbf{0.14}            \\ 
\hline
\multirow{9}{*}{\rotatebox{90}{{PJM}}}& \multirow{8}{*}{\rotatebox{90}{{Entity}}} &  1 & 4.84  & 0.91 & 5.52 & 1.04 &  6.39 & 1.13 & 6.15 & 1.09 & 6.16 &  1.11  & \textbf{4.46}  & \textbf{0.85}           \\
 && 2  & 5.27 & 0.79 & 6.41 &  0.99 & 7.04 & 0.99 & 9.20 &  1.15 & 7.21 & 0.97   &  \textbf{5.15} & \textbf{0.76}  \\
& & 3 & 5.75 & 0.14  & 6.94 & 0.18   & 7.83 & 0.18 & 9.46 & 0.22  & 7.16 & 0.17  & \textbf{5.23} &  \textbf{0.12}   \\
 & &  4 & 5.83 & 0.22 & 6.79 & 0.27 & 7.22  & 0.26 & 9.20 & 0.34  & 7.32 & 0.27   &  \textbf{5.37} & \textbf{0.20}        \\
& &  5  & 5.45 & 0.97  & 6.19 & 1.18 & 7.50 & 1.26 & 6.46 &  1.15 & 6.36 & 1.15   & \textbf{5.38} &  \textbf{0.95}    \\
& &  6  & 5.01 & 0.10 & 5.48 &  0.11 & 5.21 & 0.10 & 6.44 & 0.12  & 6.41 & 0.13  & \textbf{4.63} &  \textbf{0.09}      \\
& &  7  & 9.53 & 0.19 & 10.45 & 0.20  & 12.53 & 0.22 & 24.64  & 0.48  & 22.51 & 0.44  & \textbf{9.15} &  \textbf{0.18}     \\
& & 8  & 4.45 & 0.44 & 5.34 & 0.54  & 5.90 & 0.56 & 6.62 & 0.63  & 5.80 & 0.59  & \textbf{4.08} &   \textbf{0.40}    \\
% \cline{2-15} 
& \multicolumn{2}{c|}{{Average}}     & 5.77 & 0.47 & 6.64 &  0.55  & 7.42 & 0.58 & 9.99 &  0.69 & 8.62 & 0.60 & \textbf{5.43} &  \textbf{0.45}     \\   
\hline
\multirow{6}{*}{\rotatebox{90}{{Australia}}} & \multirow{5}{*}{\rotatebox{90}{{Entity}}} &  1 & 4.16 & 0.74 & 4.94 & 0.87 & 7.52 & 1.26 & 7.75 & 1.40 & 8.05 & 1.34  & \textbf{3.97}  &\textbf{0.71}      \\
& & 2  & 3.96 & 0.48 & 5.35 & 0.67 & 7.89 & 0.88 & 8.32 & 1.04 & 7.45 & 0.85  &   \textbf{3.69}     &  \textbf{0.45}    \\
& & 3 & 3.12  & 0.35 & 3.66 & 0.40 & 6.41 & 0.69 & 7.10 & 0.82 & 7.63 & 0.74  & \textbf{3.04} &  \textbf{0.34}\\
& & 4 & 5.03 &  0.18 & 6.86 & 0.24 & 10.70 & 0.32  & 9.99 & 0.36 & 14.17 & 0.43  &    \textbf{4.83}   &  \textbf{0.18} \\
 & & 5 & 7.22 & \textbf{0.09} & 7.47 & 0.10  & 10.19 & 0.13 & 9.28 & 0.13 & 11.54 & 0.15 &    \textbf{7.01}   &  \textbf{0.09} \\
% \cline{2-15} 
&\multicolumn{2}{c|}{{Average}}   & 4.70 & 0.37 & 5.65 & 0.45  & 8.54 & 0.66  & 8.38 & 0.75 & 9.77 & 0.70 &\textbf{4.51} &    \textbf{0.35}   \\ \hline
\end{tabular}%
}
\end{table*}

The prediction for all the algorithms is done daily at 11~a.m., and the algorithms obtain future loads for a prediction horizon of $L=24$ hours. Therefore, every vector of load forecasts consists of $24$ hourly predictions. For simplicity, all the methods use the default values for hyper-parameters. For the methods based on offline learning, we use the initial 30 days of data for training. Such training size has been selected based on MTGP method \cite{Zhang2014}, which can hardly use larger training sizes due to its high learning complexity (see also Table ~\ref{tab:complexity}).

The proposed method is implemented as follows. The instance vector $\bd{x}_t~=~[\bd{s}_t^\top, \bd{r}_{t+1}^\top, \bd{r}_{t+2}^\top,...,\bd{r}_{t+L}^\top]^\top$ is composed of the loads at time $t$ and observations corresponding to the next $L$ time steps. The observations vector \mbox{$\bd{r}_t=[\bd{r}_{t,1}^\top, \bd{r}_{t,2}^\top, ..., \bd{r}_{t,K}^\top]^\top$} contains information about the temperature $w_t$ at time $t$ and the mean of past temperatures $\overline{w}_{c(t)}$ in each entity. The observations vector $\bd{r}_{t,k}$ for each entity $k=1,2,...,K$ is represented by the feature vector $u_r(\bd{r}_{t,k})$ as in \cite{alvarez2021probabilistic}, which is defined in terms of temperature shifts. 

The calendar information $c(t)$ ranges from 1 to 24 to denote the hours of weekdays, and from 25 to 48 to denote the hours of weekends and holidays, that is, \mbox{$c(t) \in \{1, 2, ..., 48\}$}. Therefore, the proposed method obtains parameters $\bd{M}_{s,c}, \boldsymbol{\Sigma}_{s,c}, \bd{M}_{r,c}, \boldsymbol{\Sigma}_{r,c}$ for each calendar type \mbox{$c(t) \in \{1,2, ...,48\}$}, which are updated using forgetting factors set to $\lambda_s = 0.9$ and $\lambda_r = 0.9$ in all calendar types and datasets. The initialization of the parameters of the model is set to $\bd{M}_0 = \bd{0}$, $\boldsymbol{\Sigma}_0 = \bd{0}$, $\bd{P}_0 = \bd{I}_{d}$, and $\gamma_0 = 0$.

As described in Section \ref{sec:multitask}, we use a thresholding technique that sets to zero small off-diagonal elements to effectively remove these weak covariances while preserving the positive definiteness of the covariance matrix. For each component $(i,j)$ of the matrix, the correlation is computed as \mbox{$\rho_{i,j} = \mathbf{\Sigma}_{i,j}/\sqrt{\mathbf{\Sigma}_{i,i} \cdot \mathbf{\Sigma}_{j,j}}$}, and off-diagonal elements of $\boldsymbol{\Sigma}$ are set to $0$ whenever $|\rho_{i,j}|$ is below a predefined threshold. To ensure the matrix remains positive definite after thresholding, we compensate by adding rank-1 matrices of the form $\mathbf{v}\mathbf{v}^{\top}$, where the vector $\mathbf{v}$ is given as \mbox{$v_i =\sqrt{|\boldsymbol{\Sigma}_{i,j}|},$} $ v_j = -v_i, \hspace{0.2cm} \text{and} \hspace{0.1cm} v_k=0$, for any $i,j,k$ components such that  $k \neq i,j$. Note that vector $\mathbf{v}$ has nonzero entries only at the positions corresponding to the removed covariances. Since such rank-1 matrices are positive semi-definite, this correction helps restore and preserve the positive definiteness of the full covariance matrix. This approach is computationally efficient and results in a sparse, positive-definite structure that enhances prediction robustness, particularly in data-scarce settings. In all the experimental results, we use the threshold of 0.1 as the minimum non-zero correlation.

The proposed method is compared against five state-of-the-art techniques: two single-task methods (APLF~\cite{alvarez2021probabilistic} and N-HiTS~\cite{challu2023nhits}) and three multi-task methods (MTGP~\cite{Zhang2014}, VAR model with exogenous variables~\cite{lutkepohl2005new}, and MLR). APLF method is implemented as in~\cite{alvarez2021probabilistic}. The technique N-HiTS~\cite{challu2023nhits} is a single-task deep learning method designed to capture consumption patterns across different frequencies and scales.

The MTGP method is implemented as described in~\cite{Zhang2014} and is a multi-task probabilistic method that can capture non-linear relationships in load consumption data from multiple entities. Finally, the VAR model with exogenous variables~\cite{lutkepohl2005new} and MLR estimate future loads of multiple entities as linear combinations of past consumption values and seasonal exogenous variables, such as the day of the week and the hour of the day. %with $m=5$ as the lag value.

% \vspace{-0.3cm}
\subsection{Numerical results}

The first set of experiments quantifies the prediction error of the proposed method in comparison with the 6 existing techniques for 4 datasets. Table~\ref{tab:experiments} shows the RMSE and MAPE, assessing the overall load prediction errors of the six methods. These results show that the proposed method achieves higher accuracy than existing multi-task learning algorithms. Simple multi-task methods, such as VAR and MLR, that are not specifically designed for load forecasting, make the highest error in their predictions due to the simple linear combinations of previous load consumptions and entities. The multi-task method MTGP achieves better performance but is still outperformed by \mbox{Multi-APLF}, that achieves the best performance in all the entities of the datasets.

\begin{figure*}[h!]
    \centering
    \begin{subfigure}[b]{0.45\textwidth}
        \centering
        \psfrag{Load consumption [GW]}[b][][0.7]{Load consumption [GW]}
        \psfrag{Time [Hours]}[t][][0.7]{\raisebox{-5pt}{Time [Hours]}}
        \psfrag{data1abcdefghijklmn}[l][l][0.7]{Load}
        \psfrag{multi}[l][l][0.7]{\mbox{Multi-APLF}}
        \psfrag{mtgp}[l][l][0.7]{MTGP}
        \psfrag{varma}[l][l][0.7]{VAR}
        \psfrag{mlr}[l][l][0.7]{MLR}
        \psfrag{nhits}[l][l][0.7]{N-HITS}
        \psfrag{single}[l][l][0.7]{APLF}
        \psfrag{9}[][][0.7]{9\hspace{0.1cm}}
        \psfrag{10}[][][0.7]{10\hspace{0.15cm}}
        \psfrag{11}[][][0.7]{11\hspace{0.15cm}}
        \psfrag{12}[][][0.7]{12\hspace{0.15cm}}
        \psfrag{13}[][][0.7]{13\hspace{0.15cm}}
        \psfrag{14}[][][0.7]{14\hspace{0.15cm}}
        \psfrag{15}[][][0.7]{15\hspace{0.15cm}}
        \psfrag{16}[][][0.7]{16\hspace{0.15cm}}
        \psfrag{6}[][][0.5]{}
        \psfrag{120}[][][0.7]{\raisebox{-15pt}{12}}
        \psfrag{18}[][][0.5]{}
        \psfrag{24}[][][0.7]{\raisebox{-15pt}{24}}
        \psfrag{30}[][][0.5]{}
        \psfrag{36}[][][0.7]{\raisebox{-15pt}{36}}
        \psfrag{42}[][][0.5]{}
        \psfrag{48}[][][0.7]{\raisebox{-15pt}{48}}
        \includegraphics[width=\textwidth]{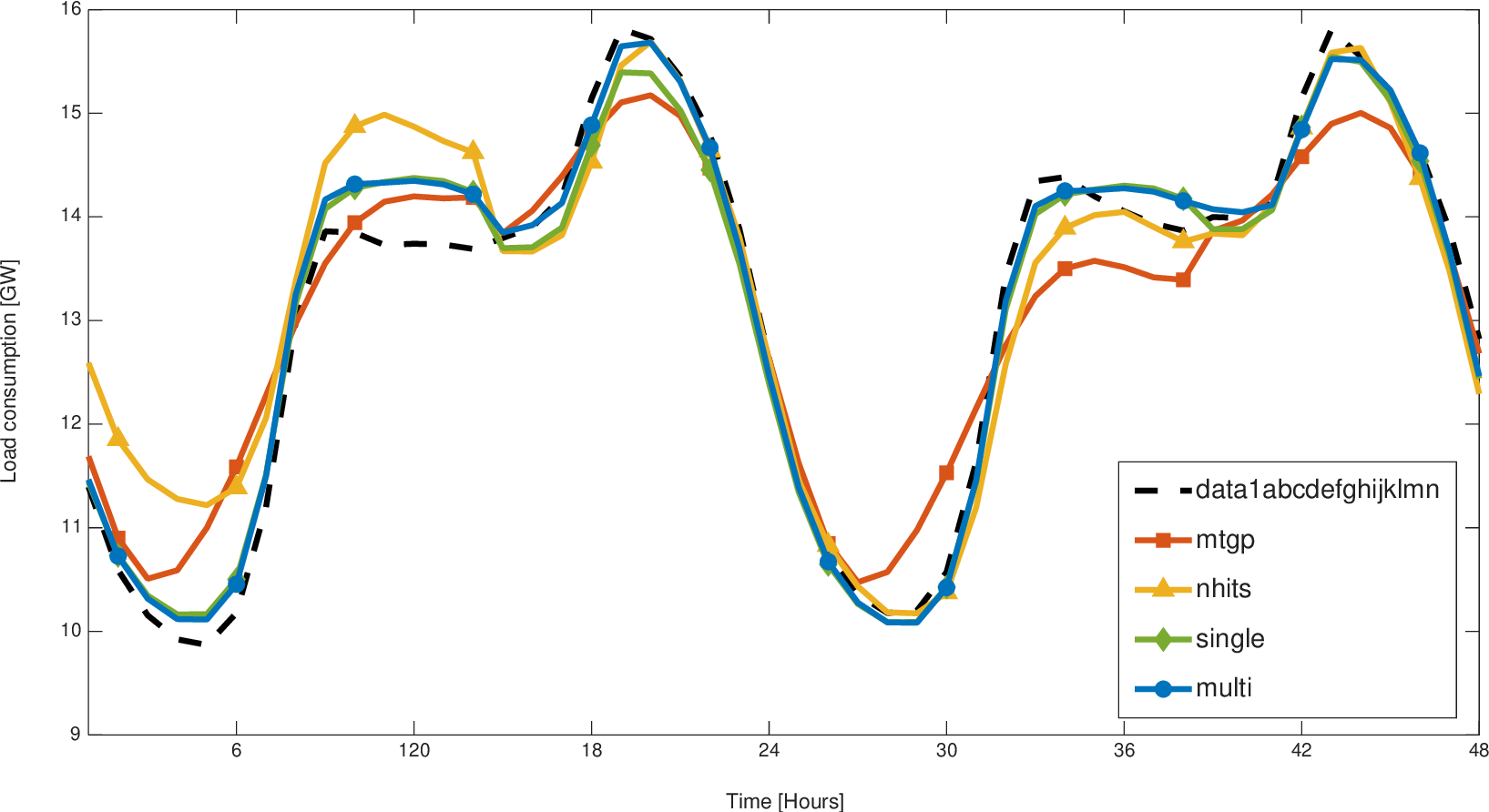}
        \caption{Example of a two-day prediction for GEFCom dataset}
        \label{fig:gefcom2017_loads}
    \end{subfigure}
    \hspace{0.5cm}
        \begin{subfigure}[b]{0.45\textwidth}
        \centering
        \psfrag{CDF}[b][][0.7]{CDF\raisebox{-5pt}{}}
        \psfrag{Error [GW]}[t][][0.7]{\raisebox{-5pt}{Error [GW]}}
        \psfrag{data1abcdefghijklmn}[l][l][0.7]{\mbox{Multi-APLF}}
        \psfrag{data3}[l][l][0.7]{MTGP}
        \psfrag{data4}[l][l][0.7]{VAR}
        \psfrag{data5}[l][l][0.7]{MLR}
        \psfrag{data6}[l][l][0.7]{N-HITS}
        \psfrag{data7}[l][l][0.7]{APLF}
        \psfrag{0.8}[][][0.7]{0.8 \hspace{0.15cm}}
        \psfrag{0.6}[][][0.7]{0.6 \hspace{0.15cm}}
        \psfrag{0.4}[][][0.7]{0.4 \hspace{0.15cm}}
        \psfrag{0.2}[][][0.7]{0.2 \hspace{0.15cm}}
        \psfrag{0.05}[][][0.7]{\raisebox{-15pt}{0.05} }
        \psfrag{0.10}[][][0.7]{\raisebox{-15pt}{0.10}}
        \psfrag{0.15}[][][0.7]{\raisebox{-15pt}{0.15}}
        \psfrag{0.20}[][][0.7]{\raisebox{-15pt}{0.20} }
        \psfrag{0.25}[][][0.7]{\raisebox{-15pt}{0.25} }
        \psfrag{0.30}[][][0.7]{\raisebox{-15pt}{0.30} }
        \psfrag{1}[][][0.7]{1}
        \psfrag{0}[][][0.5]{}
        \includegraphics[width=\textwidth]{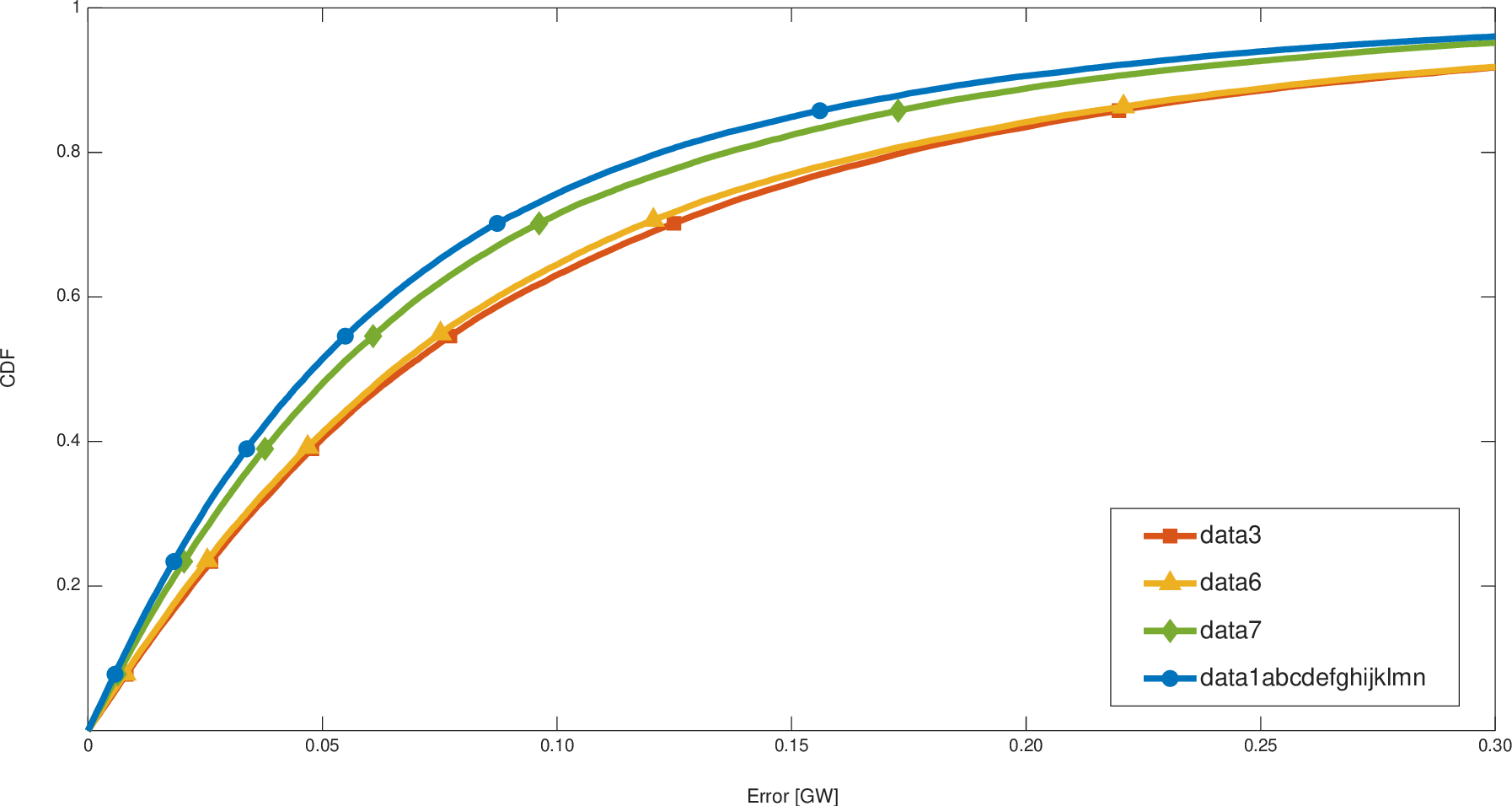}
        \caption{CDF for GEFCom dataset}
        \label{fig:gefcom2017_cdf}
    \end{subfigure}
    \caption{\mbox{Multi-APLF} method outperforms existing techniques and achieves more accurate predictions.}
    \label{fig:pred_performance}
\end{figure*}

\begin{figure*}[h!]
    \centering
    \begin{subfigure}[b]{0.45\textwidth}
        \centering
        \psfrag{C(q)}[][][0.7]{C(q)}
        \psfrag{Quantiles}[b][][0.7]{Quantiles}
        \psfrag{Multi-task}[l][l][0.7]{\mbox{Multi-APLF}}
        \psfrag{MTGP}[l][l][0.7]{MTGP}
        \psfrag{Single-task}[l][l][0.7]{APLF}
        \psfrag{abcdefghijklmnopqrstuvwxyzabcdefgh}[l][l][0.7]{Perfectly calibrated}
        \psfrag{0}[][][0.4]{}        
        \psfrag{1}[][][0.4]{}        
        \psfrag{2}[][][0.7]{0.2 \hspace{2mm}}
        \psfrag{3}[][][0.4]{}                
        \psfrag{4}[][][0.7]{0.4 \hspace{2mm}}
        \psfrag{5}[][][0.4]{}                
        \psfrag{6}[][][0.7]{0.6 \hspace{2mm}}
        \psfrag{7}[][][0.4]{}        
        \psfrag{8}[][][0.7]{0.8 \hspace{2mm}}
        \psfrag{9}[][][0.4]{}                
        \psfrag{10}[][][0.7]{1 \hspace{1mm}}
        \psfrag{0.1}[t][][0.5]{}
        \psfrag{0.2}[t][][0.7]{0.2}
        \psfrag{0.3}[t][][0.4]{}        
        \psfrag{0.4}[t][][0.7]{0.4}
        \psfrag{0.5}[t][][0.4]{}        
        \psfrag{0.6}[t][][0.7]{0.6}
        \psfrag{0.7}[t][][0.4]{}        
        \psfrag{0.8}[t][][0.7]{0.8}
        \psfrag{0.9}[t][][0.4]{}  
        \psfrag{C(q)}[][][0.7]{C(q)}
        \includegraphics[width=\textwidth]{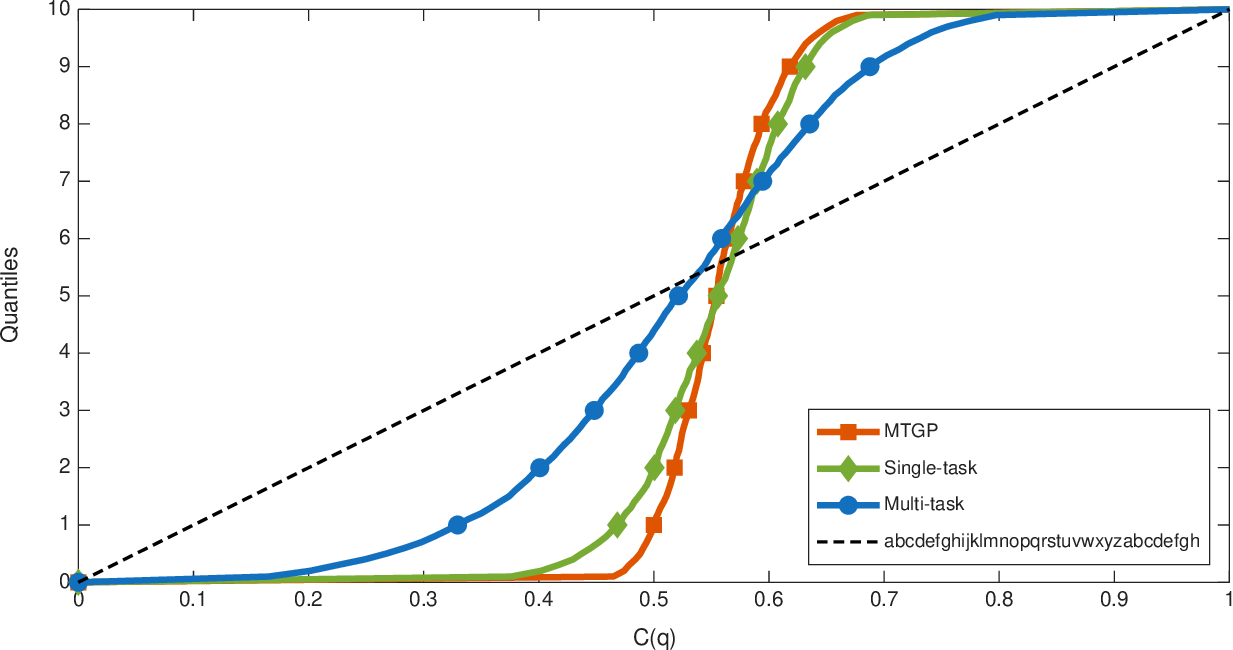}
        \caption{PJM dataset}
        \label{fig:PJM_cal}
    \end{subfigure}
    \hspace{0.5cm}
    \begin{subfigure}[b]{0.45\textwidth}
        \centering
        \psfrag{C(q)}[][][0.7]{C(q)}
        \psfrag{Quantiles}[b][][0.7]{Quantiles}
        \psfrag{Multi-task}[l][l][0.7]{\mbox{Multi-APLF}}
        \psfrag{MTGP}[l][l][0.7]{MTGP}
        \psfrag{Single-task}[l][l][0.7]{APLF}
        \psfrag{abcdefghijklmnopqrstuvwxyzabcdefgh}[l][l][0.7]{Perfectly calibrated}
        \psfrag{0}[][][0.4]{}        
        \psfrag{1}[][][0.4]{}        
        \psfrag{2}[][][0.7]{0.2 \hspace{2mm}}
        \psfrag{3}[][][0.4]{}                
        \psfrag{4}[][][0.7]{0.4 \hspace{2mm}}
        \psfrag{5}[][][0.4]{}                
        \psfrag{6}[][][0.7]{0.6 \hspace{2mm}}
        \psfrag{7}[][][0.4]{}        
        \psfrag{8}[][][0.7]{0.8 \hspace{2mm}}
        \psfrag{9}[][][0.4]{}                
        \psfrag{10}[][][0.7]{1 \hspace{1mm}}
        \psfrag{0.1}[t][][0.5]{}
        \psfrag{0.2}[t][][0.7]{0.2}
        \psfrag{0.3}[t][][0.4]{}        
        \psfrag{0.4}[t][][0.7]{0.4}
        \psfrag{0.5}[t][][0.4]{}        
        \psfrag{0.6}[t][][0.7]{0.6}
        \psfrag{0.7}[t][][0.4]{}        
        \psfrag{0.8}[t][][0.7]{0.8}
        \psfrag{0.9}[t][][0.4]{}   
        \includegraphics[width=\textwidth]{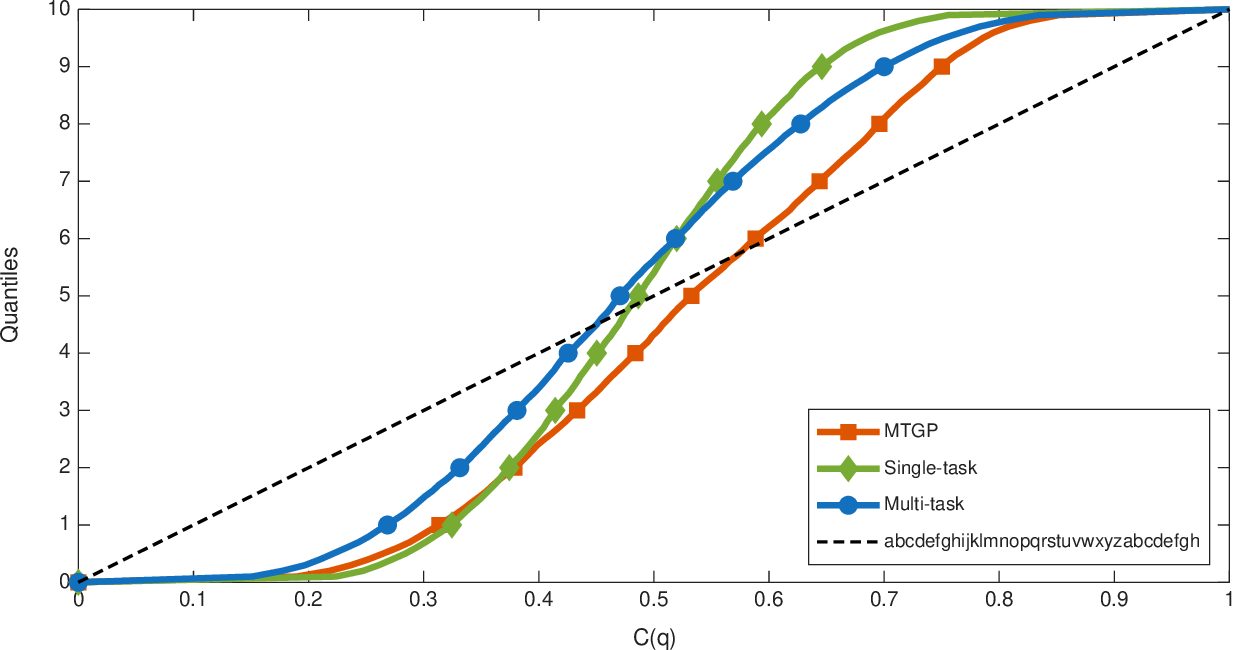}
        \caption{Australia dataset}
        \label{fig:Australia_cal}
    \end{subfigure}
    \caption{Calibration plots of different datasets evaluating the alignment between observed values and predicted quantiles.}
    % \vspace{-0.2cm}
    \label{fig:calibration}
\end{figure*}

The forecasting performance of the proposed method is illustrated in Figure~\ref{fig:pred_performance} for GEFCom dataset. Figure~\ref{fig:gefcom2017_loads} shows two days of load demand and load forecasts for the sum of loads of all the entities. Figure~\ref{fig:gefcom2017_cdf} presents the empirical cumulative distribution functions (CDFs) of the absolute prediction errors. These results further illustrate the performance improvement provided by the proposed method. In particular, Figure~\ref{fig:gefcom2017_cdf} shows that high errors occur with low probability for \mbox{Multi-APLF}. For instance, the error of the proposed method is smaller than 0.1 GW with a probability of 0.8, while most of the other methods reach errors up to 0.2 GW with such probability.

\begin{figure}
\centering
        \centering
        \psfrag{GEFCom}[l][l][0.7]{GEFCom}
        \psfrag{NewEnglandddddd}[l][l][0.7]{NewEngland}
        \psfrag{PJM}[l][l][0.7]{PJM}
        \psfrag{MAPE}[b][][0.7]{MAPE [\%]}
        \psfrag{Hours}[t][][0.7]{Delay [h]}
        \psfrag{0}[][][0.7]{0}        
        \psfrag{5}[][][0.7]{5}        
        \psfrag{10}[][][0.7]{10}
        \psfrag{15}[][][0.7]{15}
        \psfrag{20}[][][0.7]{20} 
        \psfrag{4}[][][0.7]{}   
        \psfrag{6}[][][0.7]{6}   
        \psfrag{7}[][][0.7]{7}   
        \psfrag{8}[][][0.7]{8}   
        \psfrag{9}[][][0.7]{9}   
        \includegraphics[width=0.5\textwidth]{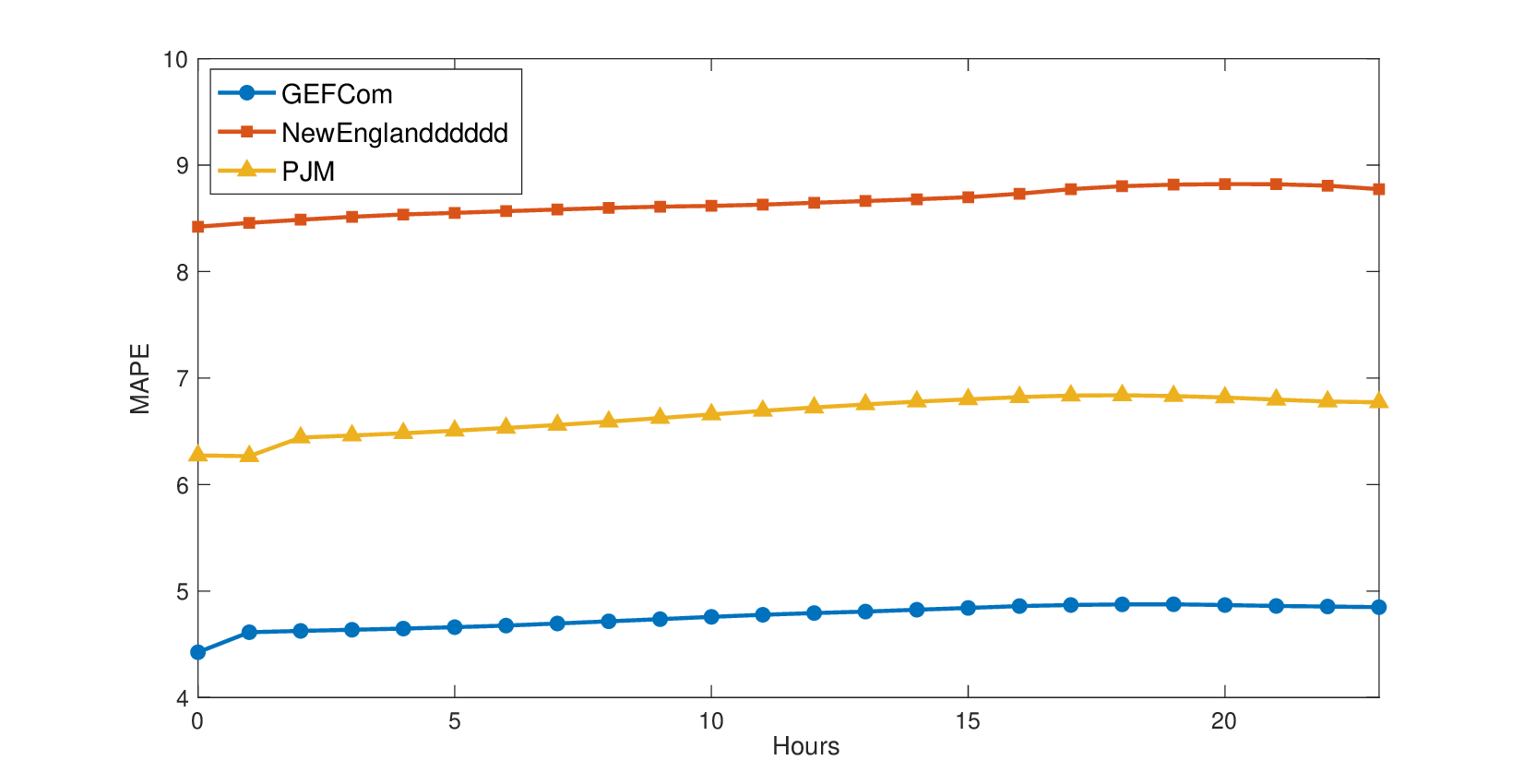}
    \caption{Forecasting error for three datasets varying input data delays.}
    \label{fig:hours_delay}
\end{figure}

\begin{figure}
\centering
        \centering
        \psfrag{CDF}[b][][0.7]{CDF}
        \psfrag{Error}[t][][0.7]{Error [MW]}
        \psfrag{0}[][][0.7]{0}        
        \psfrag{0.2}[][][0.7]{0.2\hspace{1mm}}        
        \psfrag{0.4}[][][0.7]{0.4\hspace{1mm}}
        \psfrag{0.6}[][][0.7]{0.6\hspace{1mm}}
        \psfrag{0.8}[][][0.7]{0.8\hspace{1mm}} 
        \psfrag{1}[][][0.7]{1\hspace{1mm}}   
        \psfrag{1.2}[][][0.7]{1.2}   
        \psfrag{1.6}[][][0.7]{1.6}   
        \psfrag{experiment1}[][][0.7]{\hspace{1mm} 2 entities}   
        \psfrag{experiment2}[][][0.7]{\hspace{1mm} 10 entities}   
        \psfrag{experiment3aa}[][][0.7]{40 entities}  
        \psfrag{5}[][][0.7]{5}  
        \psfrag{15}[][][0.7]{15}  
        \psfrag{25}[][][0.7]{25}  
        \psfrag{5}[][][0.7]{25}  
        \includegraphics[width=0.5\textwidth]{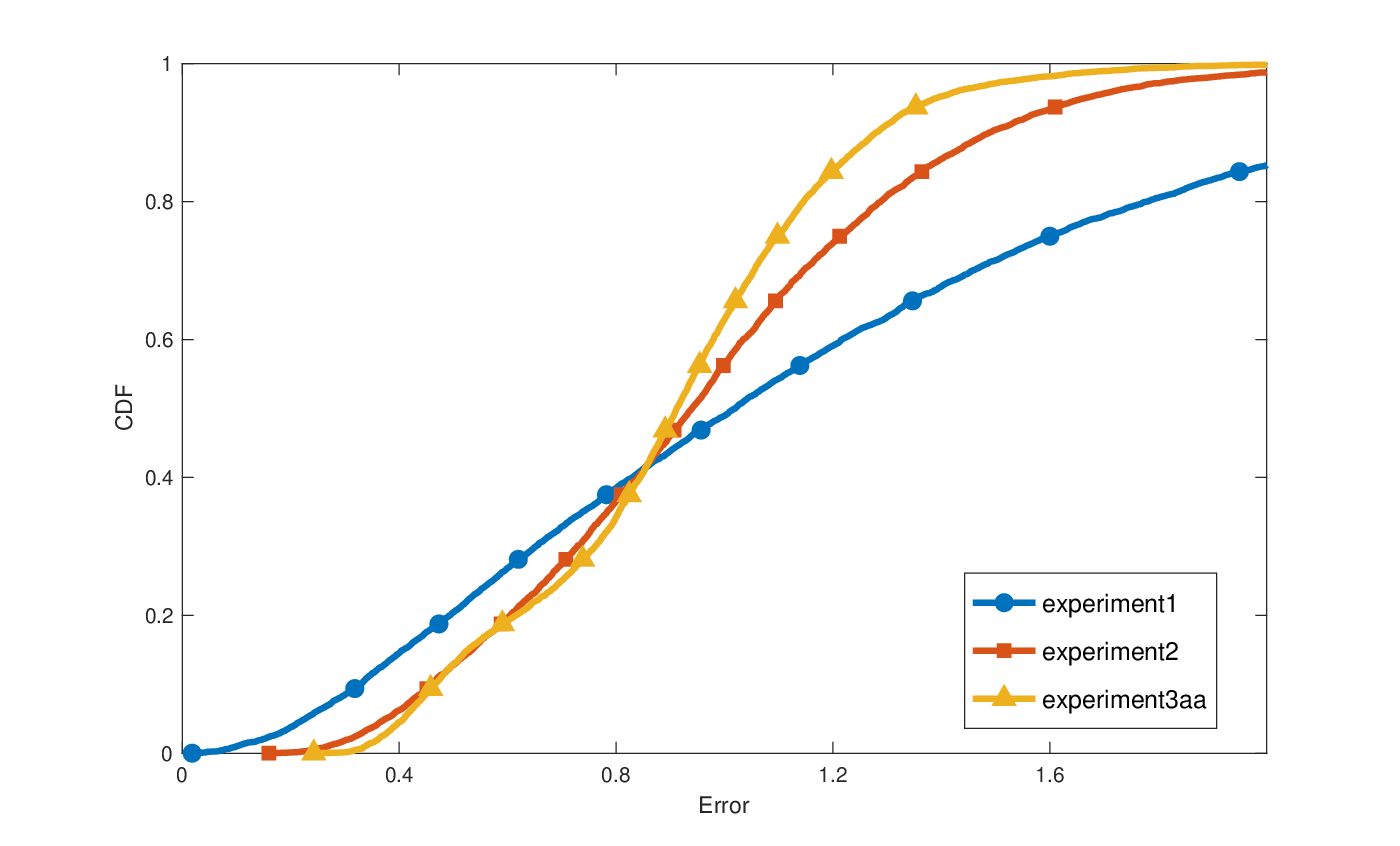}
    \caption{CDF of 3-month prediction errors varying the number of entities in the large-scale residential dataset.}
    \label{fig:increasing_entities}
\end{figure}

\begin{table}
\caption{Probabilistic performance of the proposed method and state-of-the-art techniques.}
\label{tab:prob_performance}
\setlength{\tabcolsep}{2.3pt} % Reduce column spacing
\renewcommand{\arraystretch}{1.2} % Adjust row height
\resizebox{\columnwidth}{!}{%
\large
\begin{tabular}{l|ccc|ccc|ccc}
\toprule
       & \multicolumn{3}{c|}{{APLF}} & \multicolumn{3}{c|}{{MTGP}} & \multicolumn{3}{c}{{\mbox{Multi-APLF}}} \\
      & \multicolumn{1}{c}{{CRPS}} &  \multicolumn{1}{c}{{Pinb.}} & \multicolumn{1}{c|}{{CE}} & \multicolumn{1}{c}{{CRPS}} &  \multicolumn{1}{c}{{Pinb.}} & \multicolumn{1}{c|}{{CE}} & \multicolumn{1}{c}{{CRPS}} &  \multicolumn{1}{c}{{Pinb.}} & \multicolumn{1}{c}{{CE}} \\ \hline
{GEFCom} & 0.29 & 0.14 & 0.18 & 0.37 & 0.18 & 0.19 & \textbf{0.23} & \textbf{0.11} &\textbf{0.12} \\
{NewEngl.} & 0.29 & 0.12 & 0.17 & 0.39 & 0.19 & 0.21 & \textbf{0.26} & \textbf{0.10} & \textbf{0.12} \\
{PJM} & 0.35 & 0.15 & 0.17 & 0.42 & 0.21 & 0.21 & \textbf{0.26} & \textbf{0.13} & \textbf{0.13}  \\
{Australia} & 0.11 & \textbf{0.05} & 0.14 & 0.23 & 0.11 & \textbf{0.11} & \textbf{0.10} & \textbf{0.05} & \textbf{0.11} \\
\bottomrule
\end{tabular}%
}
\end{table}

The second set of experiments evaluates the probabilistic performance of the proposed multi-task technique against the other two methods that provide probabilistic forecasts, APLF and MTGP. 
Table~\ref{tab:prob_performance} assesses the probabilistic performance of the proposed method in comparison with the APLF and MTGP techniques, using CRPS, pinball loss, and CE as evaluation metrics. Specifically, we evaluate the sum of the predictions of all individual entities. The proposed method achieves lower values for all the metrics across all datasets. This indicates that our method provides more accurate probabilistic predictions and better captures the uncertainty in load demand. Figure~\ref{fig:calibration} provides further detail on calibration. We can see that the proposed method achieves a closer alignment between predicted and observed probabilities than MTGP and APLF.

The third set of experiments further evaluates the practical applicability of the proposed method in more complex situations. Figure~\ref{fig:hours_delay} shows the results obtained in scenarios where the actual load demand is received with delays ranging from 1 to 23 hours, while Figure~\ref{fig:increasing_entities} shows the performance varying the number of forecasting entities. In particular, Figure~\ref{fig:hours_delay} illustrates the robustness of the proposed method to delayed load demand, demonstrating resilience to communication latency across `GEFCom', `NewEngland', and `PJM' datasets. For instance, in the New England dataset, the error increases only slightly (from 8.4 to 8.8), and remains consistently lower than that of offline methods  without delay as shown in Table~\ref{tab:experiments}.

Figure~\ref{fig:increasing_entities} shows the CDFs of the absolute prediction errors of the proposed method varying the number of forecasting entities using the `New South Wales' dataset that has data from 400 buildings. These numerical results show the CDFs of the absolute errors over $5$ random instantiations for each number of entities, where each entity is composed of $10$ buildings. As illustrated in Figure~\ref{fig:increasing_entities}, the errors decrease as the number of entities increases, demonstrating the robustness and scalability of the proposed multi-task forecasting approach in large-scale scenarios.

The proposed method achieves more accurate predictions and more reliable probabilistic forecasts than existing load forecasting techniques. The numerical results confirm that our method can improve load forecasting by sharing information among multiple entities, while adapting to dynamic changes in consumption patterns.

\section{Conclusion}
\label{sec:conclusion}
The paper presents adaptive multi-task learning methods
for probabilistic load forecasting (Multi-APLF). 
In particular, the proposed methods can adapt to changes in consumption patterns over time, learn the relationship among multiple entities, and assess load uncertainties. The parameters of the model are updated using a recursive algorithm, and the method provides probabilistic predictions with the most recent parameters. We describe the theoretical guarantees of online learning and probabilistic prediction steps of the method for multiple entities. We present an efficient implementation of the method with lower computational and memory complexity than existing multi-task approaches. In addition, the paper compares the proposed method with state-of-the-art techniques designed for both single-task and multi-task load forecasting. The experimental results show that the proposed method achieves higher performance and provides more reliable forecasts by effectively leveraging information shared across multiple entities.

\appendices
\section{Proof of theorem \ref{theorem1}}
\label{ap:proof_theorem1}
To find the parameters of the mean matrix $\bd{M}_i$ and the covariance matrix $\bd{\Sigma}_i$ that maximize \eqref{eq:weighted_loglikelihood} at each time $t_i$, we set the derivatives of the weighted log-likelihood to zero with respect to $\bd{M}_i$ and $\bd{\Sigma}_i$. The main goal is to obtain recursive equations that iteratively update these parameters that maximize \eqref{eq:weighted_loglikelihood} at each time step.
\begin{align*}
\frac{\partial L_i(\bd{M}_i, \bd{\Sigma}_i)  }{\partial \bd{M}_i} & = -  \sum_{j=1}^i  \lambda^{i-j}\bd{\Sigma}_i^{-1} (\bd{s}_{t_j} - \bd{M}_i\bd{u}_{t_j}) \bd{u}_{t_j}^\top = 0 \\
 & \hspace{-1.5cm} \iff \sum_{j=1}^i \lambda^{i-j} \bd{\Sigma}^{-1} \bd{s}_{t_j}\bd{u}_{t_j}^\top = \sum_{j=1}^i \lambda^{i-j} \bd{\Sigma}^{-1}\bd{M}_i \bd{u}_{t_j} \bd{u}_{t_j}^\top\\
& \hspace{-1.5cm} \iff \bd{M}_i = \left( \sum_{j=1}^i \lambda^{i-j} \bd{s}_{t_j}\bd{u}_{t_j}\right) \left( \sum_{j=1}^i \lambda^{i-j} \bd{u}_{t_j}\bd{u}_{t_j}^\top   \right)^{-1}.
\end{align*}
We denote 
$$\bd{P}_i = \left( \sum_{j=1}^i \lambda^{i-j} \bd{u}_{t_j}\bd{u}_{t_j}^\top   \right)^{-1}$$
then,
\begin{align}
    \bd{P}_i = &  \left(  \lambda\sum_{j=1}^{i-1} \lambda^{i-j-1}  \bd{u}_{t_j} \bd{u}_{t_j}^\top + \bd{u}_{t_i}\bd{u}_{t_i}^\top \right)^{-1} \nonumber\\
    = & \left( \lambda \bd{P}_{i-1}^{-1} + \bd{u}_{t_i}\bd{u}_{t_i}^\top \right)^{-1} \nonumber \\
    %= & (\lambda  \bd{P}_{i-1}^{-1})^{-1} - (\lambda  \bd{P}_{i-1}^{-1})^{-1}\bd{u}_i (\bd{I} + \bd{u}_i^\top \lambda^{-1} \bd{P}_{i-1} \bd{u}_i)^{-1}\bd{u}_i^\top \lambda^{-1} \bd{P}_{i-1}   \\
    %= &\frac{1}{\lambda} \bd{P}_{i-1} - \frac{1}{\lambda^2}\bd{P}_{i-1} \bd{u}_i(\bd{I} + \frac{1}{\lambda} \bd{u}_i^\top \bd{P}_{i-1} \bd{u}_i)^{-1}\bd{u}_i^\top  \bd{P}_{i-1} \\
    = & \frac{1}{\lambda} \left( \bd{P}_{i-1} - \bd{P}_{i-1} \bd{u}_{t_i} ( \lambda + \bd{u}_{t_i}^\top \bd{P}_{i-1} \bd{u}_{t_i})^{-1} \bd{u}_{t_i}^\top \bd{P}_{i-1} \right) \label{dem:P_1}\\
    = & \frac{1}{\lambda} \left( \bd{P}_{i-1} -  \bd{P}_{i-1} \bd{u}_{t_i} \bd{k}_i  \right) \label{dem:P_2}
\end{align} 
where \eqref{dem:P_1} and \eqref{dem:P_2} are obtained by using the matrix inversion lemma and
% \vspace{-0.2cm}
\begin{equation}
    \bd{k}_i =  ( \lambda + \bd{u}_{t_i}^\top \bd{P}_{i-1} \bd{u}_{t_i})^{-1} \bd{u}_{t_i}^\top \bd{P}_{i-1}. \label{dem:k}
    % \vspace{-0.2cm}
\end{equation}
We denote % \vspace{-0.2cm}
\begin{align*}
    \bd{Q}_i = & \sum_{j=1}^i \lambda^{i-j} \bd{s}_{t_j} \bd{u}_{t_j}^\top = \lambda \sum_{j=1}^{i-1} \lambda^{i-j-1} \bd{s}_{t_j} \bd{u}_{t_j}^\top + \bd{s}_{t_i} \bd{u}_{t_i}^\top \\
    = &  \lambda \bd{Q}_{i-1} + \bd{s}_{t_i}\bd{u}_{t_i}^\top.
\end{align*}
The matrix of means $\bd{M}_i$ can be defined in terms of $\bd{P}_i$ and $\bd{Q}_i$ as follows
\begin{align*}
    \bd{M}_i = & \bd{Q}_i\bd{P}_i\\
    = &(\lambda \bd{Q}_{i-1} + \bd{s}_{t_i} \bd{u}_{t_i}^\top) \frac{1}{\lambda} (\bd{P}_{i-1} -  \bd{P}_{i-1} \bd{u}_{t_i} \bd{k}_i)\\
    = & \bd{Q}_{i-1} \bd{P}_{i-1}  + \frac{1}{\lambda} \bd{s}_{t_i} \bd{u}_i^\top \bd{P}_{i-1} -  \bd{Q}_{i-1} \bd{P}_{i-1} \bd{u}_{t_i} \bd{k}_i\\
    & - \frac{1}{\lambda}   \bd{s}_{t_i} \bd{u}_{t_i}^\top  \bd{P}_{i-1} \bd{u}_{t_i} \bd{k}_i\\
    = &  \bd{M}_{i-1} + \frac{1}{\lambda} \bd{s}_{t_i}(\lambda + \bd{u}_{t_i}^\top \bd{P}_{i-1}  \bd{u}_{t_i}) \bd{k}_i  \\
    & -    \bd{M}_{i-1} \bd{u}_{t_i} \bd{k}_i- \bd{s}_{t_i} \bd{u}_{t_i}^\top \bd{P}_{i-1}  \bd{u}_{t_i}  \bd{k}_i \\
    %= & \bd{M}_{i-1}^\top  + \bd{k}_i (\bd{s}_i^\top + \frac{1}{\lambda} \bd{u}_i^\top \bd{P}_{i-1}  \bd{u}_i \bd{s}_i^\top- \bd{u}_i^\top\bd{M}_{i-1}^\top - \frac{1}{\lambda} \bd{u}_i^\top \bd{P}_{i-1}  \bd{u}_i \bd{s}_i^\top) \\
    = & \bd{M}_{i-1} + (\bd{s}_{t_i} - \bd{M}_{i-1} \bd{u}_{t_i})\bd{k}_i.
\end{align*}
The recursive update of the parameters of the mean matrix is obtained by replacing \eqref{dem:k} in the above equation so that we get
\begin{align}
    \bd{M}_i 
    = & \bd{M}_{i-1} + \frac{(\bd{s}_{t_i} - \bd{M}_{i-1} \bd{u}_{t_i})  \bd{u}_{t_i}^\top \bd{P}_{i-1}}{\lambda + \bd{u}_{t_i}^\top \bd{P}_{i-1}\bd{u}_{t_i}}.  \label{dem:M}
\end{align}
For the covariance matrix, we have that
\begin{align}
 \frac{ \partial L_i(\bd{M}_i,  \bd{\Sigma}_i)  }{\partial \bd{\Sigma}_i }   & = 0  \nonumber\\
  & \hspace{-2cm}  \iff \frac{1}{2} \sum_{j=1}^i \lambda^{j-i}   \bd{\Sigma}_i^{-1} (\bd{s}_{t_j} - \bd{M}_i\bd{u}_{t_j})(\bd{s}_{t_j} - \bd{M}_i\bd{u}_{t_j})^\top \bd{\Sigma}_i^{-1} \nonumber \\
 &\hspace{-1.2cm}  -\frac{1}{2} \sum_{j=1}^i \lambda^{j-i}  \bd{\Sigma}_i^{-1}  = 0   \nonumber\\  
& \hspace{-2cm}  \iff  \sum_{j=1}^i \lambda^{i-j} \bd{\Sigma}_i  =\sum_{j=1}^i \lambda^{j-i}  (\bd{s}_{t_j} - \bd{M}_i\bd{u}_{t_j})(\bd{s}_{t_j} - \bd{M}_i\bd{u}_{t_j})^\top.   \label{derivative_sigma}
\end{align}
If we denote $\bd{\gamma}_i=  \sum_{j=1}^i \lambda^{i-j}$, then
\begin{equation}
     \bd{\gamma}_i=  \lambda \sum_{j=1}^{i-1} \lambda^{i-j-1} + 1 = \lambda \gamma_{i-1} + 1. \label{dem:gamma_i}
\end{equation}
Substituting $\bd{\gamma}_i$ in \eqref{derivative_sigma}, we have that % \vspace{-0.2cm}
\begin{align}
\bd{\Sigma}_i \bd{\gamma}_i  = &  \sum_{j=1}^i \lambda^{i-j}  (\bd{s}_{t_j} - \bd{M}_i\bd{u}_{t_j})(\bd{s}_{t_j} - \bd{M}_i\bd{u}_{t_j})^\top  \nonumber \\
= & \lambda \sum_{j=1}^{i-1} \lambda^{i-j-1} (\bd{s}_{t_j}- \bd{M}_{i-1}\bd{u}_{t_j})(\bd{s}_{t_j}- \bd{M}_{i-1}\bd{u}_{t_j})^\top \nonumber \\
& + (\bd{s}_{t_i}- \bd{M}_{i}\bd{u}_{t_i})(\bd{s}_{t_i}- \bd{M}_{i}\bd{u}_{t_i})^\top \nonumber \\
= & \lambda \bd{\Sigma}_{i-1} \bd{\gamma}_{i-1} + (\bd{s}_{t_i}- \bd{M}_{i}\bd{u}_{t_i})(\bd{s}_{t_i}- \bd{M}_{i}\bd{u}_{t_i})^\top \nonumber\\
= &  \bd{\Sigma}_{i-1} (\bd{\gamma}_{i}-1) \nonumber \\
& + ( \bd{s}_{t_i}- \bd{M}_{i-1}\bd{u}_{t_i}) (1-\bd{k}_i^\top \bd{u}_{t_i})^2( \bd{s}_{t_i}- \bd{M}_{i-1}\bd{u}_{t_i})^\top.\label{sigma_derivative_1}
\end{align}

The last equality \eqref{sigma_derivative_1} is obtained by replacing $\bd{M}_i$ and $\gamma_{i-1}$ given by \eqref{dem:M} and \eqref{dem:gamma_i} respectively. Using \eqref{dem:k} in the following equation, we obtain that % \vspace{-0.1cm}
\begin{align}
    1-\bd{k}_i^\top \bd{u}_{t_i} = & 1-\frac{\bd{u}_{t_i}^\top \bd{P}_{i-1}^\top \bd{u}_{t_i}}{\lambda+\bd{u}_{t_i}^\top\bd{P}_{i-1}\bd{u}_{t_i}} = \frac{\lambda}{\lambda+\bd{u}_{t_i}^\top\bd{P}_{i-1}\bd{u}_{t_i}}. \label{k}
\end{align}
Then, replacing \eqref{k} in \eqref{sigma_derivative_1}, we have that
\begin{align*}
\bd{\Sigma}_i \bd{\gamma}_i  = &  \bd{\Sigma}_{i-1} (\bd{\gamma}_{i}-1) +   \frac{\lambda^2( \bd{s}_{t_i}- \bd{M}_{i-1}\bd{u}_{t_i})( \bd{s}_{t_i}- \bd{M}_{i-1}\bd{u}_{t_i})^\top}{(\lambda+\bd{u}_{t_i}^\top\bd{P}_{i-1}\bd{u}_{t_i})^{2}}. \nonumber % \vspace{-0.2cm}
\end{align*}
Hence, we obtain that the maximum likelihood estimator for the covariance matrix is updated recursively as follows % \vspace{-0.2cm}
\begin{align}
\bd{\Sigma}_i  = & \bd{\Sigma}_{i-1} \nonumber \\
    & - \frac{1}{\bd{\gamma}_i} \Big( \bd{\Sigma}_{i-1}  -  \frac{\lambda^2 ( \bd{s}_{t_i}- \bd{M}_{i-1}\bd{u}_{t_i})( \bd{s}_{t_i}- \bd{M}_{i-1}\bd{u}_{t_i})^\top }{(\lambda+\bd{u}_{t_i}^\top \bd{P}_{i-1}\bd{u}_{t_i})^2} \Big).  \nonumber 
\end{align}

\section{Proof of theorem \ref{theorem}}
\label{ap:proof_theorem2}
We prove the following lemma, which will be applied afterwards in the proof of Theorem \ref{theorem}. 
\begin{lemma}
\label{lemma}
Let $\set{N}(\bd{x}; \bd{a}, \bd{B})$ and $\mathcal{N}(\bd{y}; \bd{C}\bd{x}, \bd{D})$ be two Gaussian density functions with $\bd{x}, \bd{a} \in \mathbb{R}^n$, $\bd{y} \in \mathbb{R}^m$, $\bd{B} \in \mathbb{R}^{n \times n}$, \mbox{$\bd{C}\in \mathbb{R}^{n \times m}$}, and $\bd{D}\in \mathbb{R}^{m \times m}$. Then, we have % \vspace{-0.2cm}

\begin{align*}
 \set{N}(\bd{x}; & \bd{a}, \bd{B}) \ \mathcal{N}(\bd{y}; \bd{C}\bd{x}, \bd{D}) \\
 & = \set{N}(\bd{x}; \bd{E} \bd{e}, \bd{E}) \ \mathcal{N}(\bd{y}; \bd{C}\bd{a}, \bd{D} + \bd{C} \bd{B} \bd{C}^\top )
\end{align*}
with % \vspace{-0.3cm}
\begin{align}
\label{eq:E}
\bd{E} & = (\bd{B}^{-1} + \bd{C}^\top\bd{D}^{-1}\bd{C})^{-1}\\
\label{eq:e}
\bd{e} & = \bd{B}^{-1}\bd{a} + \bd{C}^\top \bd{D}^{-1} \bd{y}.
\end{align}

\begin{proof}
\begin{align}
    \set{N}(\bd{x}; & \bd{a}, \bd{B}) \ \mathcal{N}(\bd{y}; \bd{C}\bd{x}, \bd{D})  = \nonumber\\
    = & \dfrac{1}{(2\pi)^{n} |\bd{B}|^{1/2} |\bd{D}|^{1/2}} \exp  \Big{\lbrace} -\dfrac{1}{2} (\bd{x}-\bd{a})^\top \bd{B}^{-1}(\bd{x}-\bd{a})\nonumber\\
    & -\frac{1}{2} (\bd{y}-\bd{Cx})^\top \bd{D}^{-1}(\bd{y}-\bd{Cx})  \Big{\rbrace}.\label{eq:gaussian}
\end{align}

The exponent part of the above equation equals
\begin{align*}
      &  -\frac{1}{2} \big{(}\bd{x}^\top ( \bd{B}^{-1} + \bd{C}^\top\bd{D}^{-1}\bd{C}) \bd{x} - \bd{x}^\top(\bd{B}^{-1}\bd{a} + \bd{C}^\top \bd{D}^{-1} \bd{y})\\
      & -  (\bd{a}^\top\bd{B}^{-1} + \bd{y}^\top \bd{D}^{-1}\bd{C})\bd{x} + \bd{y}^\top \bd{D}^{-1}\bd{y} + \bd{a}^\top \bd{B}^{-1} \bd{a}\big{)}\\
   =&  -\frac{1}{2} \big{(}\bd{x}^\top \bd{E}^{-1} \bd{x} - \bd{x}^\top\bd{e} -  \bd{e}^\top\bd{x} + \bd{y}^\top \bd{D}^{-1}\bd{y} + \bd{a}^\top \bd{B}^{-1} \bd{a}\big{)}
\end{align*}
where the last equality is obtained using the definition of $\bd{E}$ and $\bd{e}$ in~\eqref{eq:E} and~\eqref{eq:e}, respectively. Multiplying by $\bd{E}^{-1}\bd{E}$ and using that $\bd{E}$ is a symmetric matrix, we have that 
\begin{align*}
      -\frac{1}{2}& \left((\bd{x}-\bd{a})^\top  \bd{B}^{-1}(\bd{x}-\bd{a})  + (\bd{y}-\bd{Cx})^\top \bd{D}^{-1}(\bd{y}-\bd{Cx})\right) \\
   =&  -\frac{1}{2} \big{(}\bd{x}^\top \bd{E}^{-1} \bd{x} - \bd{x}^\top\bd{E}^{-1}\bd{E}\bd{e}-  \bd{e}^\top\bd{E}\bd{E}^{-1}\bd{x}\\
   &+ \bd{y}^\top \bd{D}^{-1}\bd{y} + \bd{a}^\top \bd{B}^{-1} \bd{a}\big{)}\\
   =& -\frac{1}{2} \big{(}(\bd{x} - \bd{Ee})^\top \bd{E}^{-1}(\bd{x} - \bd{Ee}) - (\bd{Ee})^\top \bd{E}^{-1} \bd{Ee}\\
   &+ \bd{y}^\top \bd{D}^{-1}\bd{y} + \bd{a}^\top \bd{B}^{-1} \bd{a} \big{)}.
\end{align*}
Substituting the exponential part in~\eqref{eq:gaussian} by the above equality, we have that
\begin{align}
    \set{N}(\bd{x}; & \bd{a}, \bd{B}) \ \mathcal{N}(\bd{y}; \bd{C}\bd{x}, \bd{D}) \nonumber\\
    = & \dfrac{1}{(2\pi)^{n} |\bd{B}|^{1/2} |\bd{D}|^{1/2}} \exp \Big{\lbrace} -\frac{1}{2} (\bd{x} - \bd{Ee})^\top \bd{E}^{-1}(\bd{x} - \bd{Ee})\nonumber\\
    & -\frac{1}{2}\big{(} \bd{y}^\top \bd{D}^{-1}\bd{y} + \bd{a}^\top \bd{B}^{-1} \bd{a} -(\bd{Ee})^\top \bd{E}^{-1} \bd{Ee}\big{)} \Big{\rbrace}\nonumber\\
    = & \set{N}(\bd{x}; \bd{E} \bd{e}, \bd{E}) \dfrac{1}{(2\pi)^{n/2}|\bd{D} + \bd{C}^\top \bd{B} \bd{C}|^{1/2}}\nonumber\\
    \label{eq:gaussian_exp}
    &  \exp \Big{\lbrace}-\frac{1}{2}\big{(}\bd{y}^\top \bd{D}^{-1}\bd{y} + \bd{a}^\top \bd{B}^{-1} \bd{a} -(\bd{Ee})^\top \bd{E}^{-1} \bd{Ee}\big{)} \Big{\rbrace}
\end{align}
where the last equality is obtained by using that $$|\bd{E}| = \frac{1}{|\bd{B}^{-1} + \bd{C}\bd{D}^{-1}\bd{C}|} = \frac{|\bd{D}| |\bd{B}|}{|\bd{D} + \bd{C}^\top \bd{B} \bd{C}|}$$ due to the matrix determinant lemma. The exponential part of equation~\eqref{eq:gaussian_exp} equals
\begin{align*}
& - \frac{1}{2}\big{(} \bd{y}^\top (\bd{D}^{-1} - \bd{D}^{-1}\bd{C}\bd{E} \bd{C}^\top \bd{D}^{-1})\bd{y}-  \bd{a}^\top\bd{B}^{-1} \bd{E}  \bd{C}^\top \bd{D}^{-1} \bd{y} \\
& -  \bd{y}^\top \bd{D}^{-1}\bd{C} \bd{E} \bd{B}^{-1}\bd{a} + \bd{a}^\top \bd{B}^{-1} \bd{a}   - \bd{a}^\top\bd{B}^{-1} \bd{E} \bd{B}^{-1}\bd{a}\big{)}
\end{align*}
by using the definition of $\bd{e}$ in~\eqref{eq:e}. Then, the result is obtained since the above expression
equals
$$-\frac{1}{2}(\bd{y}-\bd{Ca})^\top \bd{F}^{-1}  (\bd{y}-\bd{Ca})$$
by using the matrix inversion lemma and   
\begin{align}
(\bd{D} + \bd{C} \bd{B} \bd{C}^\top) \bd{D}^{-1}\bd{C} \bd{E} \bd{B}^{-1}\bd{a} & = \bd{Ca}\\
\bd{a}^\top \bd{B}^{-1} \bd{a} - \bd{a}^\top\bd{B}^{-1} \bd{E} \bd{B}^{-1}\bd{a}   - \bd{a}^\top\bd{C}^\top \bd{F}^{-1} \bd{Ca} & =0.
\end{align}
\end{proof}
\end{lemma}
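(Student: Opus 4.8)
The plan is to prove the identity by direct manipulation, recognizing it as the Gaussian conjugacy (``completing the square'') formula: the left-hand side is the unnormalized joint density of a Gaussian prior $\set{N}(\bd{x};\bd{a},\bd{B})$ together with a linear-Gaussian observation model $\set{N}(\bd{y};\bd{C}\bd{x},\bd{D})$, and the right-hand side is the equivalent factorization into the marginal density of $\bd{y}$ times the posterior density of $\bd{x}$ given $\bd{y}$. First I would write both densities explicitly so that their product carries a single exponential whose exponent is the sum $-\tfrac12(\bd{x}-\bd{a})^\top\bd{B}^{-1}(\bd{x}-\bd{a})-\tfrac12(\bd{y}-\bd{C}\bd{x})^\top\bd{D}^{-1}(\bd{y}-\bd{C}\bd{x})$, while the prefactor is the normalizing constant $\big((2\pi)^{(n+m)/2}|\bd{B}|^{1/2}|\bd{D}|^{1/2}\big)^{-1}$.

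The core step is to reorganize the exponent as a quadratic form in $\bd{x}$. Expanding the two quadratics and collecting terms, the coefficient of the second-order part in $\bd{x}$ is exactly $\bd{B}^{-1}+\bd{C}^\top\bd{D}^{-1}\bd{C}=\bd{E}^{-1}$, and the first-order part in $\bd{x}$ is governed by $\bd{B}^{-1}\bd{a}+\bd{C}^\top\bd{D}^{-1}\bd{y}=\bd{e}$. I would then complete the square about $\bd{E}\bd{e}$, rewriting the exponent as $-\tfrac12(\bd{x}-\bd{E}\bd{e})^\top\bd{E}^{-1}(\bd{x}-\bd{E}\bd{e})$ plus a remainder free of $\bd{x}$. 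The first piece, paired with a factor $|\bd{E}|^{-1/2}$, is precisely $\set{N}(\bd{x};\bd{E}\bd{e},\bd{E})$; the required splitting of the determinant prefactor follows from the matrix determinant lemma, which gives $|\bd{B}|\,|\bd{D}|=|\bd{E}|\,|\bd{D}+\bd{C}\bd{B}\bd{C}^\top|$, so that the leftover factor $|\bd{D}+\bd{C}\bd{B}\bd{C}^\top|^{-1/2}$ is exactly the constant needed for the $\bd{y}$-Gaussian.

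The hardest part is to verify that the $\bd{x}$-free remainder is exactly $\set{N}(\bd{y};\bd{C}\bd{a},\bd{D}+\bd{C}\bd{B}\bd{C}^\top)$. After substituting $\bd{e}$ and using $(\bd{E}\bd{e})^\top\bd{E}^{-1}(\bd{E}\bd{e})=\bd{e}^\top\bd{E}\bd{e}$, the remainder becomes a quadratic form in $\bd{y}$ whose second-order coefficient is $\bd{D}^{-1}-\bd{D}^{-1}\bd{C}\bd{E}\bd{C}^\top\bd{D}^{-1}$. I would apply the Woodbury matrix inversion lemma to identify this with $(\bd{D}+\bd{C}\bd{B}\bd{C}^\top)^{-1}$, and then match the first-order and constant terms so as to complete the square in $\bd{y}$ about $\bd{C}\bd{a}$. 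Concretely, this reduces to checking the identity $(\bd{D}+\bd{C}\bd{B}\bd{C}^\top)\bd{D}^{-1}\bd{C}\bd{E}\bd{B}^{-1}\bd{a}=\bd{C}\bd{a}$, which fixes the mean, together with the cancellation $\bd{a}^\top\bd{B}^{-1}\bd{a}-\bd{a}^\top\bd{B}^{-1}\bd{E}\bd{B}^{-1}\bd{a}-\bd{a}^\top\bd{C}^\top(\bd{D}+\bd{C}\bd{B}\bd{C}^\top)^{-1}\bd{C}\bd{a}=0$ of the constant term; both follow from $\bd{E}^{-1}=\bd{B}^{-1}+\bd{C}^\top\bd{D}^{-1}\bd{C}$ combined with Woodbury. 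Assembling the $\bd{x}$-factor, the $\bd{y}$-factor, and the determinant bookkeeping then yields the stated identity.
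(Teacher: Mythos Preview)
Your proposal is correct and follows essentially the same route as the paper's own proof: expand the exponent, complete the square in $\bd{x}$ using $\bd{E}^{-1}$ and $\bd{e}$, invoke the matrix determinant lemma for the normalizing constants, and then identify the $\bd{x}$-free remainder with $\set{N}(\bd{y};\bd{C}\bd{a},\bd{D}+\bd{C}\bd{B}\bd{C}^\top)$ via Woodbury together with the two auxiliary identities $(\bd{D}+\bd{C}\bd{B}\bd{C}^\top)\bd{D}^{-1}\bd{C}\bd{E}\bd{B}^{-1}\bd{a}=\bd{C}\bd{a}$ and $\bd{a}^\top\bd{B}^{-1}\bd{a}-\bd{a}^\top\bd{B}^{-1}\bd{E}\bd{B}^{-1}\bd{a}-\bd{a}^\top\bd{C}^\top(\bd{D}+\bd{C}\bd{B}\bd{C}^\top)^{-1}\bd{C}\bd{a}=0$. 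These are precisely the steps and identities the paper uses.
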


\begin{proof}[Proof of Theorem \ref{theorem}] 
We denote the sequences of observations and loads as \mbox{$\bd{r}_{t+1:t+i}= \{\bd{r}_{t+1},\bd{r}_{t+2},...,\bd{r}_{t+i} \}$} and \mbox{$\bd{s}_{t+i-1:t+i}= \{\bd{s}_{t+i-1},\bd{s}_{t+i-2},...,\bd{s}_{t+i} \}$} for any $i$.

We proceed by induction, for $i=1$ we have that   % \vspace{-0.1cm}
\begin{align}
    p&(  \bd{s}_{t+1}|\bd{s}_t, \bd{r}_{t+1}) \propto \   p(\bd{s}_{t+1},\bd{s}_t, \bd{r}_{t+1})  \nonumber  \\
    &= p(\bd{r}_{t+1} | \bd{s}_{t+1}, \bd{s}_t) p (\bd{s}_{t+1}| \bd{s}_{t}) p(\bd{s}_t) \nonumber \\
    & \propto p(\bd{r}_{t+1}| \bd{s}_{t+1}) p(\bd{s}_{t+1}|\bd{s}_t) \label{eq:p1} \\
    & \propto \set{N}(\bd{s}_{t+1}, \bd{M}_{r,c}\bd{u}_r, \boldsymbol{\Sigma}_{r,c}) \set{N}(\bd{s}_{t+1}, \bd{M}_{s,c}\bd{u}_s, \boldsymbol{\Sigma}_{s,c}) \label{eq:p2}
\end{align}
where proportionality relationships are due to the fact that $\bd{s}_t$ and $\bd{r}_{t+1}$ are known. Equation \eqref{eq:p1} is derived from the fact that the conditional distribution of $\bd{r}_{t+1}$ depends only on $\bd{s}_{t+1}$ by definition of the HMM, and equation \eqref{eq:p2} because we model these conditional distributions as Gaussian, as shown in equations \eqref{eq:model1_MT} and \eqref{eq:model2_MT}.

Using the previous Lemma \ref{lemma}, \eqref{eq:p2} leads to \eqref{eq:theorem} with $\widehat{\bd{s}}_{t+1}$ given by \eqref{eq:pred_mean} and $\widehat{\bd{E}}_{t+i}$ given by \eqref{eq:pred_error}, since $\widehat{\bd{s}}_t = \bd{s}_t$ and $\widehat{\bd{E}}_{t}= \bd{0}$. If the statement holds for $i-1$, then for $i$ we have that
% \vspace{-0.2cm}
\begin{align}
    p(&\bd{s}_{t+i}|\bd{s}_t, \bd{r}_{t+1:t+i}) \propto \   p(\bd{s}_{t+i},\bd{s}_t, \bd{r}_{t+1:t+i})  \nonumber  \\
    = \ & \int  p(\bd{s}_{t+i-1:t+i},\bd{s}_t, \bd{r}_{t+1:t+i})  d \bd{s}_{t+i-1} \label{eq:p3} \\ 
    = \ & \int  p(\bd{s}_{t+i-1:t+i},\bd{s}_t, \bd{r}_{t+1:t+i-1}) p(\bd{r}_{t+i} | \bd{s}_{t+i})  d\bd{s}_{t+i-1} \label{eq:p4}   \\
    = \ & p(\bd{r}_{t+i} | \bd{s}_{t+i}) \label{eq:p5}  \\
    & \cdot \int  p(\bd{s}_{t+i-1},\bd{s}_t, \bd{r}_{t+1:t+i-1}) p(\bd{s}_{t+i}|\bd{s}_{t+i-1})  d\bd{s}_{t+i-1} \nonumber  \\
    \propto \ &  p(\bd{r}_{t+i} | \bd{s}_{t+i}) \nonumber \\
    & \cdot \int  p(\bd{s}_{t+i-1}|\bd{s}_t, \bd{r}_{t+1:t+i-1}) p(\bd{s}_{t+i}|\bd{s}_{t+i-1})  d\bd{s}_{t+i-1} \nonumber  \\
    \propto \ &  \set{N}(\bd{s}_{t+i};\bd{M}_{r,c}\bd{u}_r, \bd{\Sigma}_{r,c}) \nonumber \\
      & \hspace{-0.5cm} \cdot \hspace{-0.15cm} \int \set{N}(\bd{s}_{t+i-1}; \hat{\bd{s}}_{t+i-1}, \hat{\bd{e}}_{t+i-1}) \set{N}(\bd{s}_{t+i}; \bd{M}_{s,c}\bd{u}_s, \bd{\Sigma}_{s,c})  d\bd{s}_{t+i-1}   \label{eq:p6} 
\end{align}
where proportionality relationships are due to the fact that $\bd{s}_t$ and $\bd{r}_{t+1:t+1}$ are known. Equation \eqref{eq:p3} is derived through marginalization, equation \eqref{eq:p4} and \eqref{eq:p5} are obtained by using the properties of HMMs, and equation \eqref{eq:p6} is obtained by applying the induction hypothesis and modeling of conditional distributions as Gaussian, as described in equations \eqref{eq:model1_MT} and \eqref{eq:model2_MT}. Subsequently, applying Lemma~\ref{lemma} and substituting $\bd{u}_s~=~[1, \ \bd{s}_{t+i-1}^\top]^\top $ into \eqref{eq:p6}, we obtain that % \vspace{-0.1cm}
\begin{align}
    p(\bd{s}_{t+i}|&\bd{s}_t, \bd{r}_{t+1:t+i} ) \nonumber\\
     & \hspace{-0.7cm} \propto \ \set{N}(\bd{s}_{t+i};\bd{M}_{r,c}\bd{u}_r, \bd{\Sigma}_{r,c}) \nonumber \\ 
    & \hspace{-0.4cm} \cdot \set{N}(\bd{s}_{t+i}; \bd{M}_s\widehat{\bd{u}}_{s}, \bd{\Sigma}_s + \bd{M}_s\bd{N}\widehat{\bd{E}}_{t+i-1}(\bd{M}_s\bd{N})^\top).  \label{eq:p8}
\end{align}
The result in the theorem is obtained by applying Lemma~\ref{lemma} in \eqref{eq:p8} again.
\end{proof}

\bibliographystyle{IEEEtran}
\bibliography{references.bib} 

\begin{IEEEbiography} 
[{\includegraphics[width=1in, height=1.25in, clip, keepaspectratio]{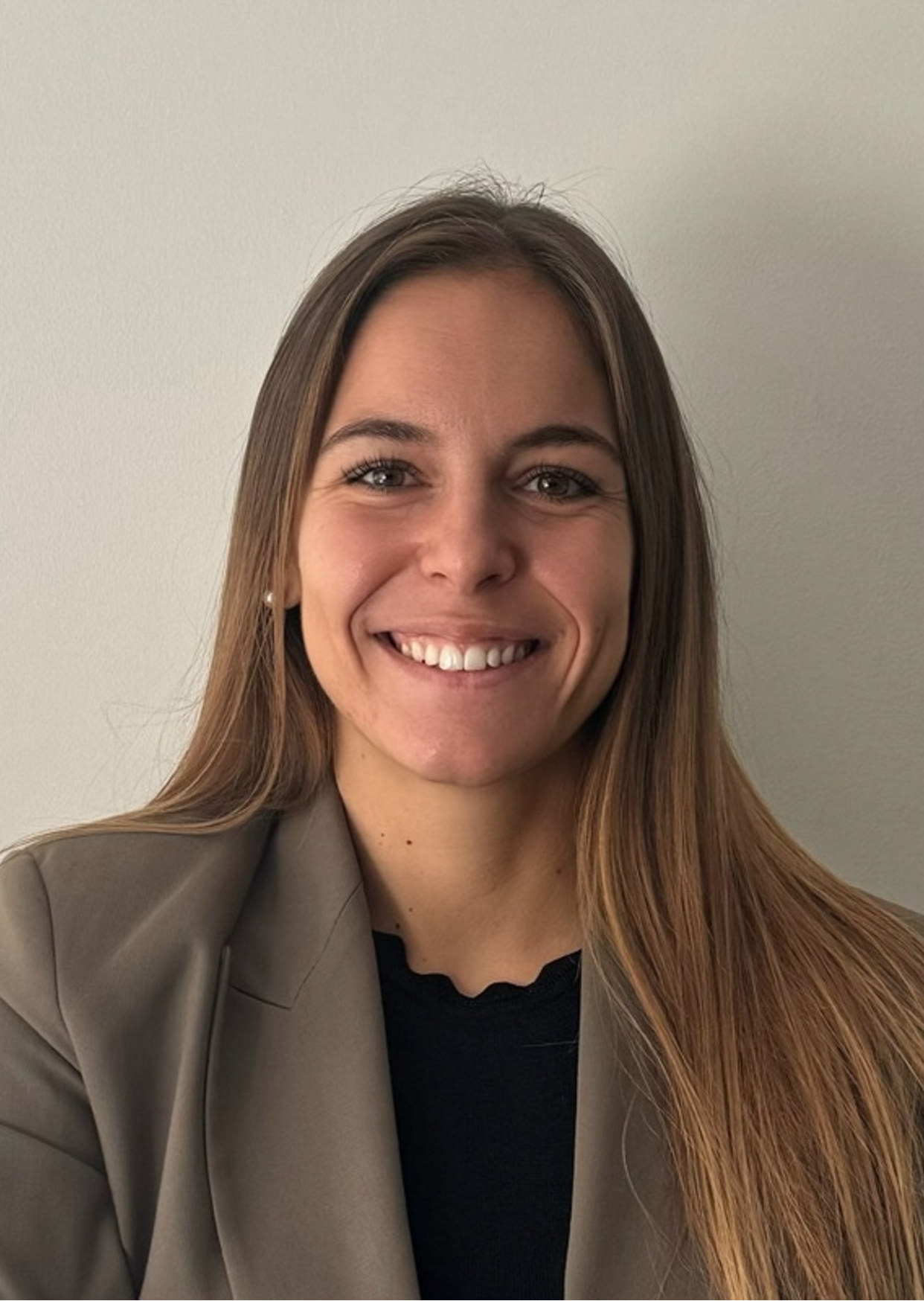}}]{Onintze Zaballa} obtained her Bachelor’s degree in Mathematics in 2017, and her Ph.D. degree in Computer Engineering in 2024 from the University of the Basque Country, Spain. She joined the Basque Center for Applied Mathematics (BCAM) in 2019, where she initially worked as a research technician. In 2020, Onintze started her Ph.D. under the guidance of Dr. Aritz Pérez and Prof. Jose A. Lozano and is currently working at Zenit Solar Tech. 

During her Ph.D., Onintze carried out a research stay at the Department of Applied Mathematics and Theoretical Physics, University of Cambridge. She co-organized the workshop ``The Mathematics of Machine Learning'' that took place at ETH Zurich in 2024 and she has served as a Technical Program Committee member for the IEEE Wireless Communications and Networking Conference in 2025. Her recent work on multi-task load forecasting has received the Best Paper Award in the 2024 IEEE Sustainable Power and Energy Conference (iSPEC).
\end{IEEEbiography}

\begin{IEEEbiography} 
[{\includegraphics[width=1in, height=1.25in, clip, keepaspectratio]{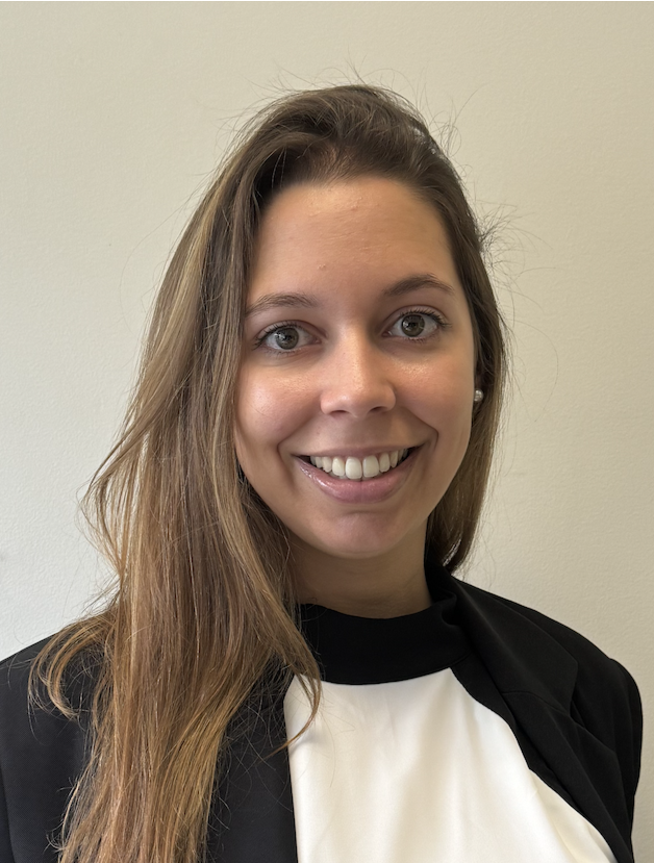}}]{Ver\'onica \'Alvarez} received the Bachelor’s degree in Mathematics from the University of Salamanca, Spain, in 2019 and Ph.D. degree in Computer Engineering from the University of the Basque Country, Spain, in 2023. 
She is currently Post-Doctoral Fellow at the Laboratory for Information and Decision Systems (LIDS) in the  Massachusetts Institute of Technology (MIT). Her main research interest is in the area of artificial intelligence, specifically in data science,
machine learning, and statistical signal processing. 

Ver\'onica co-organized the workshop ``The Mathematics of Machine Learning'' that took place at ETH Zurich in 2024. Verónica has received the Spanish Scientific Society of Computer Science (SCIE)-BBVA Foundation Research Award for Computer Science Young Researchers in 2025. Her PhD thesis has received the Best Thesis in the Field of Informatics and Artificial Intelligence awarded by the Spanish Society of Artificial Intelligence in 2024, and the Best Doctorate Thesis awarded by the University of the Basque Country in 2024, and an Honorable Mention of the Enrique Fuentes Quintana Award in 2023–2024 in the category of Engineering, Mathematics, Physics, Chemistry, and Architecture. In addition, she has received the 2022 SEIO-FBBVA Best Applied Contribution in Statistics Field for her work on adaptive probabilistic forecasting, and the Best Paper Award in the 2024 IEEE Sustainable Power and Energy Conference (iSPEC). 
\end{IEEEbiography}

\begin{IEEEbiography} 
[{\includegraphics[width=1.05in, height=1.4in, clip, keepaspectratio]{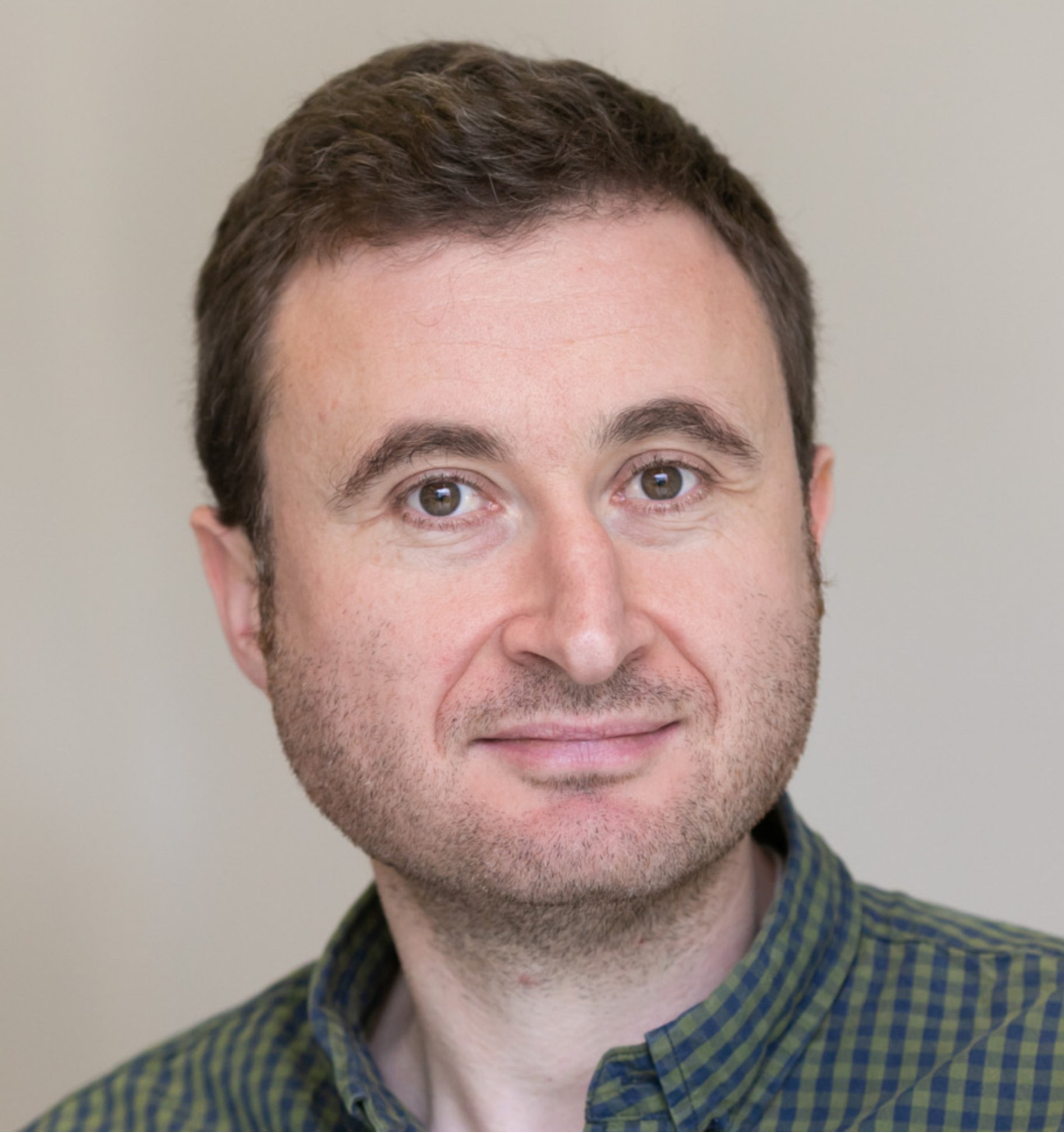}}]
{Santiago Mazuelas}
(Senior Member, IEEE) received the Ph.D. in Mathematics and Ph.D. in Telecommunications Engineering from the University of Valladolid, Spain, in 2009 and 2011, respectively.

Since 2017 he has been with the Basque Center for Applied Mathematics (BCAM) where he is currently Ikerbasque Research Professor. Prior to joining BCAM, he was a Staff Engineer at Qualcomm Corporate Research and Development from 2014 to 2017. He previously worked from 2009 to 2014 as Postdoctoral Fellow and Associate at the Laboratory for Information and Decision Systems (LIDS) at the Massachusetts Institute of Technology (MIT). 

Dr. Mazuelas was Area Editor for the IEEE Communication Letters from 2017 to 2022 and is currently Associate Editor-in-Chief for the IEEE 
Transactions on Mobile Computing, Associate Editor for the IEEE 
Transactions on Wireless Communications, and Action Editor for the Transactions on Machine Learning Research. He served as Technical Program Vice-chair at the 2021 IEEE Globecom as well as Symposium Co-chair at the 2014 IEEE Globecom, the 2015 IEEE ICC, and the 2020 IEEE ICC. He has received the Young Scientist Prize from the Union Radio-Scientifique Internationale (URSI) Symposium in 2007, and the Early Achievement Award from the IEEE ComSoc in 2018. His papers received the IEEE Communications Society Fred W. Ellersick Prize in 2012, the SEIO-BBVA Foundation Best Contribution in 2022 and 2025, and several Best Paper awards at international conferences.
\end{IEEEbiography} 

\end{document}